\newtheorem{lem}{Lemma}
\theoremstyle{definition}
\newtheorem{cor}{Corollary}
\newtheorem{defn}{Definition}
\newtheorem{rem}{Remark}
\newcommand{\trace}{\mathrm{tr}}
\newcounter{guocomm}
\newcommand{\guorm}[1]{\ignorespaces}
\title{\textbf{Frameless Graph Knowledge Distillation}}
\author{Dai Shi,\footnote{Western Sydney Univeristy,
\texttt{20195423@student.westernsydney.edu.au}, \texttt{y.guo@westernsydney.edu.au}} \, 
Zhiqi Shao\footnote{University of Sydney,
\texttt{zsha2911@uni.sydney.edu.au}, \texttt{junbin.gao@sydney.edu.au} \\
\text{\quad \,\,} Dai Shi and Zhiqi Shao are with equal contribution.}, \,
Yi Guo, \, Junbin Gao}  
\date{}
\begin{document}

\maketitle

\begin{abstract}
Knowledge distillation (KD) has shown great potential for transferring knowledge from a complex teacher model to a simple student model in which the heavy learning task can be accomplished efficiently and without losing too much prediction accuracy.  Recently, many attempts have been made by applying the KD mechanism to the graph representation learning models such as graph neural networks (GNNs) to accelerate the model's inference speed via student models. However, many existing KD-based GNNs utilize MLP as a universal approximator in the student model to imitate the teacher model's process without considering the graph knowledge from the teacher model. In this work, we provide a KD-based framework on multi-scaled GNNs, known as graph framelet, and prove that by adequately utilizing the graph knowledge in a multi-scaled manner provided by graph framelet decomposition, the student model is capable of adapting both homophilic and heterophilic graphs and has the potential of alleviating the over-squashing issue with a simple yet effectively graph surgery. Furthermore, we show how the graph knowledge supplied by the teacher is learned and digested by the student model via both algebra and geometry. Comprehensive experiments show that our proposed model can generate learning accuracy identical to or even surpass the teacher model while maintaining the high speed of inference. 


\end{abstract}

\section{Introduction}
Graph neural networks (GNNs) have achieved supreme performance for the graph-structured data in terms of both node classification and graph-level prediction \cite{wu2020comprehensive,ji2021survey}.  There are in general two types of GNNs. One type is spatial GNNs, such as MPNN \cite{gilmer2017neural}, GAT \cite{velivckovic2017graph}, and GIN \cite{xu2018powerful}. They produce node feature representation by aggregating neighbouring information in graph spatial domain. Another type is spectral GNN, for example, ChebNet \cite{defferrard2016convolutional} and GCN \cite{kipf2016semi}, generating node representation by applying filter functions in the spectral domain associated with a graph. Among enormous GNNs models, the graph wavelet frames (known as graph framelet) \cite{dong2017sparse,hammond2011wavelets,li2020fast,wang2021deep,xu2018graph,zheng2021framelets,zheng2020mathnet,zheng2022decimated,zhang2022ms} provide a multi-resolution analysis on graph signals to sufficiently capture the node feature information at different scales. Graph framelets have shown remarkable empirical outcomes in various graph learning tasks due to their \textit{tightness} property, which guarantees to reconstruct the filtered graph signals from multiple (generally known as low-pass and high-pass) domains. Although many benefits have been shown from applying GNNs to graph structured data, several issues have been identified recently for GNNs. In general, there are two sources of these issues. One of them is raised when a GNN contains a relatively large number of layers, leading to the so-called over-smoothing \cite{li2018deeper} and over-squashing  problems\cite{topping2021understanding}. The other source of issues is mainly from the adaptability of GNNs to datasets with different properties,  for example, homophilic and heterophilic graphs, two ends of the spectrum of the consistency of  nodes labels in neighbourhoods \cite{chen2021graph}. Another example is a model's efficiency, i.e. inference latency, more precisely, how well it  deals with large-scale graph input \cite{zhang2021graph}.


Although many attempts have been made to tackle the first three issues, i.e., over-smoothing, over-squashing, and heterophily adaption, searching the solution to alleviating inference latency has just begun recently due to the increasing press from large scale graph learning tasks in industry. The first brick that paves the path of accelerating the model's inference speed is graph pruning \cite{zhou2021accelerating} and quantization \cite{tailor2020degree}. However, it is known that the effectiveness of these methods is very limited as graph dependency is not yet resolved. 
To go around this problem, Zhang \emph{et al.}  \cite{zhang2021graph} provided a graph-free method to transfer  GNNs model into a multi-layer perceptron (MLP). The transformation method in \cite{zhang2021graph}, known as graph knowledge distillation (KD), aims to train a student model that has simpler model structure or less computation complexity with the knowledge presented by the more complex teacher model. The approach was first developed in computer vision \cite{hinton2015distilling} and showed good performance in delivering identical or even higher learning accuracy from the student model \cite{zhang2021graph,hinton2015distilling,chen2022sa,zhang2018deep,chen2020online,sun2019patient,xu2020deep,zhu2020knowledge}. 

The success of the method proposed in \cite{zhang2021graph} relies heavily on the relationship between the node features and graph topology. When most connected nodes are in the same class (known as high homophilic), mixing node features with adjacency information and producing a graph-less model is an effective strategy. However, when the graph connectivity and the node classes are almost independent (i.e., the graph is heterophilic), one may have to further refine the model to capture more detailed graph knowledge. Recently, the work in \cite{chen2022sa} takes this point and builds a structure-aware {MLP}. In fact, it has been shown that whether a student model can deliver a supreme performance depends on properties of data and training process, such as the geometry of the data, the bias in the 
distillation objective function, and strong monotonicity of the student classifier \cite{gou2021knowledge}. This inspires us to distill the knowledge from a multi-scale graph signal decomposition, i.e., framelet decomposition, to pass onto the student model. Moreover, we further simplify the original graph framelet model so that its corresponding student model can capture multi-hop graph information and thus gain a higher imitation power to the computation conducted in the later layer of the teacher model. We summarize our main contribution in this paper as follows: 
\begin{itemize}
    \item  We propose a new knowledge distillation process to learn from the multi-scale graph neural networks (known as framelets). 
    Our student model can accordingly capture detailed graph information in a multi-scale way. 
    \item We analyze how our student model can adapt to both homophilic and heterophilic graph datasets, which is followed by a detailed anatomy of how the student models learn and digest the teacher's knowledge during distillation. 
    \item We also propose a new simplified framelet model as another teacher model and provide a detailed discussion on the difference between the two teacher models from both energy and computational perspectives. For this new model, we show that the so-called over-squashing issue is bounded, and a graph geometry-based surgery is provided to further alleviate it when the number of layers in the teacher model is high. 
    \item We conduct comprehensive experiments to verify our claims in the paper. Experimental results show that the proposed model efficiently achieves state-of-the-art outcomes for node classification learning tasks.    
\end{itemize}

The rest of the paper is organized as follows: Section~\ref{related_works} provides a literature review on the recent development of framelet-based graph learning and knowledge distillation. In Section~\ref{preliminaries}, we provide preliminaries on the necessary components of our method, such as knowledge distillation, graph, and graph framelet. We propose our distillation models in Section~\ref{propose_model}  with some additional discussions including the difference between teacher models, what are the potential problems via computation, how student models learn and further handle the graph knowledge. We provide our experimental results and analysis in Section~\ref{experiment} to verify our claims in the paper. Lastly, the paper is concluded in Section~\ref{Conclusion}.

\section{Related Works}\label{related_works}
\subsection{Framelet-based Graph Learning}
Graph framelet is a type of wavelet frames. The work in \cite{dong2017sparse} presents an efficient way of approximating the graph framelet transform through the Chebyshev polynomials. Based on \cite{dong2017sparse}, many framelet-based graph learning methods have been developed recently. For example,  \cite{zheng2020mathnet,yang2022quasi} conducted graph framelet transform in graph (framelet filtered) spectral domain, with superior performances in both node classification and graph-level prediction (i.e., graph pooling) tasks. Later on, the graph framelet was applied to graph denoising \cite{zhou2021graph}, dynamic graph \cite{zhou2021spectral} and regularization based graph learning scheme \cite{shao2022generalized}. Rather than exploring the application of framelet on the spectral domain, recently, spatial framelet was proposed in \cite{chendirichlet} to overcome the so-called over-smoothing problem.  A (Dirichlet) energy perturbation scheme was introduced to enhance framelet model's adaption to heterophily graph dataset. Recently, \cite{han2022generalized} presented an asymptotic analysis of the framelet model to show graph framelet is capable of adapting both homophily and heterophily graphs.

\subsection{Knowledge Distillation}
Knowledge distillation is first introduced in the field of computer vision \cite{hinton2015distilling} in order to learn the knowledge contained in a relatively large-scale neural network (known as the teacher model) by a relatively small network (the student model) without losing too much prediction power. The learning schemes of knowledge distillation can be generally classified into two categories. The first is the offline distillation \cite{hinton2015distilling,chen2022sa,zhang2021graph,ye2021distillation} where the student model maximally approximate the learning outcome obtained from a pre-trained teacher model. The second is the online distillation including \cite{zhang2018deep,chen2020online}. Both teacher model and student model do the learning task simultaneously, and the whole distillation process is end-to-end trainable. In addition, self-distillation, where student and teacher share the same model \cite{zhang2019your,hou2019learning}, falls into this category. Good performance of self-distillation models was reported in \cite{zhang2019your,lee2019rethinking}. Some interesting distillation schemes such as the one combining both self and online distillation \cite{sun2019patient} have been developed recently.

\section{Preliminaries}\label{preliminaries}     

\subsection{Graph and Message Passing Neural Networks}
A graph $\mathcal G$ 
of $n$ nodes can be represented by its adjacency matrix $\mathbf A\in \mathbb R^{n \times n}$ , where the $i,j$th element $a_{i,j}$ indicates the connectivity from node $i$ to node $j$. Throughout this paper, we assume $\mathcal G$ is undirected (i.e., $a_{i,j} = a_{j,i}$). Let $\widehat {\mathbf A} = {\mathbf D}^{-1/2} (\mathbf A+ \mathbf I){\mathbf D}^{-1/2}$ be the normalized adjacency matrix where ${\mathbf D}$ is the diagonal matrix and its $ii$th entry is the sum of the $i$th row of $\mathbf A+ \mathbf I$. In addition, $\widehat{\mathbf L} = \mathbf{I-\widehat A}$ is the normalized Laplacian matrix. From spectral graph theory \cite{chung1997spectral}, we have $\widehat{\mathbf L} \succeq 0$, i.e. $\widehat{\mathbf L}$ is a positive semi-definite (SPD) matrix. Let $\lambda_i$ for $i=1, ..., n$ be the eigenvalues of $\widehat{\mathbf L}$, also known as graph spectra. We have $\lambda_i \in [0,2]$ . 

Messaging-passing neural networks (MPNNs) is first introduced in 
\cite{gilmer2017neural} as a unified framework for many GNNs models, including graph convolution networks (GCN)\cite{kipf2016semi}, graph attention neural networks \cite{velivckovic2017graph}(GAT), and graph SAGE \cite{hamilton2017inductive}. The principle of MPNNs is to generate graph node representation based on the graph node feature and adjacency information. Let $\mathbf X \in \mathbb R^{n \times d_0}$ be the node feature matrix of $\mathcal G$, and let $\mathbf H^{(\ell)}\in \mathbb R^{n \times d_{\ell}}$ be the embedded feature matrix at layer $\ell$ and additionally we have $\mathbf H^{(0)} = \mathbf X$. For any individual node $i$ (or indexed by node notation $v$), we write $\mathbf h^{(\ell)}_i$ as its feature at layer $\ell$ (i.e., the $i$-th row of $\mathbf H^{(\ell)}$), then we have the general formulation of MPNNs as: 
\begin{align}\label{mpnn}
    \mathbf h^{(\ell+1)}_i = \phi_\ell \left(\bigoplus_{j\in \mathcal N_i} \psi_\ell(\mathbf h^{(\ell)}_i,\mathbf h^{(\ell)}_j) \right),
\end{align}
where $\mathcal N_i$ stands for the set of all neighbors of node $i$, $\phi_\ell$ is the activation function, $\bigoplus$ is the aggregation operator and $\psi_\ell$ is the message passing function. As a typical example of  MPNN, the graph convolution networks (GCN) defines a layer-wise propagation rule based on the graph adjacency information as:
\begin{align}
     \mathbf H^{(\ell + 1)} = \sigma \big( \widehat{\mathbf A} \mathbf H^{(\ell)} \mathbf W^{(\ell)}  \big), \label{eq_classic_gcn}
\end{align}
with $\mathbf W^{(\ell)} \in \mathbb{R}^{d_{\ell}\times d_{\ell+1}}$ being the learnable feature transformation at layer $\ell$ and $\sigma()$ an activation function. Let $\{ (\lambda_i, \mathbf u_i) \}_{i=1}^n$ be the set of eigenvalue and eigenvector pairs of $\widehat{\mathbf L}$, i.e., $\widehat{\mathbf L} = \mathbf U^\top \mathbf \Lambda \mathbf U$ with $\mathbf U=[\mathbf u_1, ..., \mathbf u_n]$ and $\Lambda =\text{diag}[\lambda_1, ..., \lambda_n]$. Each column of $\mathbf H^{(\ell)}$ can be explained as a graph signal. It is easy to verify that the key operation in GCN corresponds to a localized filter by the graph Fourier transform, i.e., $\mathbf h^{(\ell+1)} = \mathbf U^\top (\mathbf I_n - \mathbf \Lambda) \mathbf U \mathbf h^{(\ell)}$, where we still use $\mathbf h^{(\ell)}$ to represent the feature transformed signals from  $\mathbf h^{(\ell)}$ with $\mathbf W^{(\ell)}$. 
In fact, $\mathbf U \mathbf h$ is known as the Fourier transform of a graph signal $\mathbf h \in \mathbb R^n$. For convenience reasons, in the sequel, we will use MPNNs and GNNs interchangeably.

\subsection{Graph Knowledge Distillation}
Knowledge distillation is initially designed as a model compression and acceleration method to efficiently produce nearly identical or even superior learning outcomes from relatively simple\footnote{The ``simplicity'' we refer here is either with a small number of computational steps (compression, i.e., self-distillation \cite{zhang2021self}) or some straightforward structure such as MLP. 
} 
student model that is designed to learn the knowledge  from a complicated teacher model \cite{hinton2015distilling}. Let us put it through the lens of graph learning. As shown in Eq.~\eqref{mpnn}, 
two pieces of information are available, graph adjacency ($\widehat{\mathbf A}$) and node features ($\mathbf X$) or their variants. They need to be provided for student models, say simple GNN or MLP, to learn.
Therefore, in general, the graph knowledge distillation process can be presented as follows: 
\begin{align}\label{distillation_initial}
    \mathcal D(\zeta_\mathcal D(\widehat{\mathbf A}),\xi_\mathcal D(\mathbf X)) \approx \text{GNN}(\zeta_T(\widehat{\mathbf A}), \xi_T(\mathbf X)),
\end{align}
where $\mathcal D$ stands for the general student model that aims to produce the learning outcomes approximating those of the teacher, and $\zeta_\mathcal D$ and $\xi_\mathcal D$ are the functions that are applied to the graph source knowledge (i.e.,$\widehat{\mathbf A}$ and $\mathbf X$). Similarly, the teacher model, some {GNN} model, can also be equipped with $\zeta_T$ and $\xi_T$ to produce $\widehat{\mathbf A}$ and $\mathbf X$'s variants in order to achieve better performance. For example, when $\zeta_T(\widehat{\mathbf A}) = \mathbf S \odot \widehat{\mathbf A}$ where $\mathbf S \in \mathbb  R^{n \times n}$ is the matrix with entries of attention coefficients generated from some attention mechanism \cite{velivckovic2017graph}, then we have reached the  GAT. We conjure that that one of the keys to success for a knowledge distillation model is to present the student model with sufficient graph information no matter whether such information is pre-processed or further augmented. In addition, as in Eq.~\eqref{distillation_initial}, a simpler learning model $ \mathcal D(\zeta_\mathcal D(\widehat{\mathbf A}),\xi_\mathcal D(\mathbf X))$ is applied to approximate a GNN. One common training strategy for knowledge distillation is to minimize the Kullback–Leibler divergence (KL-divergence) between the logits of the teacher and the student. 

\begin{rem}[Universal approximator]\label{Universal_approximation}
To ensure that student model is capable of imitating the performance of teacher model and further producing identical or even better learning outcomes than the teacher model, the student model has to be a universal approximator. As a widely recognised universal approximator, {MLP} has been applied to many distillation applications \cite{zhang2021graph,chen2022sa}. {MLP} is also one of the most commonly applied structures in many deep neural network (DNN) architectures. It is well known for capable of approximating a large family of functions, e.g. $L^2$, the group of all second-order diffeomorphisms. We note that there are many universal approximators other than {MLPs}. For example, one can approximate a function from $L^2$ with a finite composition of diffeomorphisms with compact support, which can be further approximated by a finite composition of flow mappings, etc. Refer to \cite{khesin2014arnold} for more details. This is potentially a deeper explanation of the approximation power of those distillation models utilizing other universal approximator rather than MLP although such discussion is out of the scope of this paper. 
\end{rem}

\subsection{Graph Framelet Convolution and Its Simplification}
Now we turn to framelets GNN, our main focus here. It has some interesting properties and hence is chosen as a role model. 
The graph framelets are defined by a set of filter banks  $\eta_{a,b} = \{ a; b^{(1)}, ..., b^{(L)} \}$ and the complex-valued scaling functions induced from it. The scaling functions are usually denoted as $\Psi = \{ {\alpha}; \beta^{(1)}, ..., \beta^{(L)} \}$ where $L$ represents the number of high-pass filters. Specifically, the framelet framework requires the following (refining) relationship  between the scaling functions and filter banks: 
\begin{align*}
\widehat{\alpha}(2\xi) = \widehat{a}(\xi) \widehat{\alpha}(\xi)   \,\,\, \text{and} \,\,\, &\widehat{\beta^{(r)}}(2\xi) = \widehat{b^{(r)}}(\xi) \widehat{\alpha}(\xi) ,\\
& \,\,\, \forall \xi \in \mathbb R, r =1,...,L 
\end{align*}
where  $\widehat{\alpha}$ and $\widehat{\beta^{(r)}}$ are the Fourier transformation of $\alpha$ and $\beta^{(r)}$, and  $\widehat{a}$, $\widehat{b^{(r)}}$ are the corresponding Fourier series of $a$, $b^{(r)}$ respectively. Then the graph framelets can be further  defined by $\varphi_{j,p}(v) = \sum_{i=1}^n \widehat{\alpha}\big( {\lambda_i}/{2^j} \big) u_i(p) u_i(v)$ and $\psi^r_{j,p}(v) = \sum_{i = 1}^n \widehat{\beta^{(r)}} \big( \lambda_i/ 2^j \big) u_i(p) u_i(v)$ for $r = 1,..., L$ and scale level $j = 1,...,J$. We use $u_i(v)$ to represent the eigenvector $\mathbf u_i$ at node $v$. $\varphi_{j,p}(\cdot)$ and $\psi^r_{j,p}(\cdot)$ are known as the \textit{low-pass framelets} and \textit{high-pass framelets} at node $p$.

Based on the notation presented above, one can define the framelet decomposition matrices $\mathcal W_{0,J}$ and $\mathcal W_{r,J}$  for a multi-channel signal $\mathbf H \in \mathbb R^{n \times d_0}$ as 

\begin{align}
    &\mathcal W_{0,J} = \mathbf U \widehat{a}(\frac{\boldsymbol{\Lambda}}{2^{m+J}}) \cdots \widehat{a}(\frac{\boldsymbol{\Lambda}}{2^{m}}) \mathbf U^\top \label{eq_w0j}\\
    &\mathcal W_{r,0} = \mathbf U \widehat{b^{(r)}}(\frac{\boldsymbol{\Lambda}}{2^{m}}) \mathbf U^\top, \;\;\text{for } r = 1,...,L, \label{eq_wr0}\\
    &\mathcal{W}_{r,j} = \mathbf U \widehat{b^{(r)}}(\frac{\boldsymbol{\Lambda}}{2^{m+j}})\widehat{a}(\frac{\boldsymbol{\Lambda}}{2^{m+j-1}}) \cdots \widehat{a}(\frac{\boldsymbol{\Lambda}}{2^{m}}) \mathbf U^\top, \label{eq_wrj}\\ 
    \;\;\;\; & \text{for }  r = 1,...,L, j = 1,...,J.\notag
\end{align}
where $m$ is the coarsest scale level which is the smallest value satisfying $2^{-m}\lambda_n \leq \pi$. It can be shown that $\sum_{(r,j) \in \mathcal I} \mathcal W_{r,j}^\top \mathcal W_{r,j} = \mathbf I$ for $\mathcal I = \{(r,j) : r = 1,...,L, j = 0, 1,...,J \} \cup \{ (0, J) \}$, indicating the \textit{tightness} of framelet decomposition and reconstruction.  

To mitigate potential computational cost imposed by the eigendecomposition of graph Laplacian, Chebyshev polynomials are a commonly utilized to approximate the framelet transformation matrices.
Empirically, the implementation of Chebyshev polynomials $\mathcal{T}^n_j(\xi)$ with a fixed degree $n$ has shown great power of approximating the corresponding images of the filtering function such as  $\widehat{b}(\xi)$. Thus, one can define so-called Quasi-Framelet transformation matrices \cite{yang2022quasi} for Eq.~\eqref{eq_w0j} - Eq.~\eqref{eq_wrj} as:
\begin{align}
    \mathcal{W}_{0,J} &\approx 
    \mathcal{T}_0(\frac1{2^{m+j}}\widehat{\mathbf L}) \cdots \mathcal{T}_0(\frac{1}{2^{m}}\widehat{\mathbf L}),   \label{eq:Ta0}\\
    \mathcal{W}_{r,0} &\approx  
    \mathcal{T}_r(\frac1{2^{m}}\widehat{\mathbf L}),  \;\;\; \text{for } r = 1, ..., L, \label{eq:Ta}
 \\ 
    \mathcal{W}_{r,j} &\approx 
    \mathcal{T}_r(\frac{1}{2^{m+j}}\widehat{\mathbf L})\mathcal{T}_0(\frac{1}{2^{m+j-1}}\widehat{\mathbf L}) \cdots \mathcal{T}_0(\frac{1}{2^{m}}\widehat{\mathbf L}), \label{eq:Tc}\\
    &\;\;\;\; \text{for } r=1, ..., L, j= 1, ..., J. \notag
\end{align}

Based on the theoretical and empirical support from  framelet transformation, many attempts have been made by applying  graph framelet on graph learning tasks, which can be divided into two camps. The first is \textit{spectral framelet} proposed in \cite{yang2022quasi,YangZhouYinGao2022}. 
The node feature propagation rule of spectral framelet from layer $\ell$ to $\ell+1$ is expressed as: 
\begin{align}
\mathbf H^{(\ell + 1)} = &\sigma\left( \sum_{(r,j)\in\mathcal I}\mathcal W_{r,j}^\top {\rm diag}(\mathbf \theta_{r,j}) \mathcal W_{r,j} \mathbf H^{(\ell)} \mathbf W^{(\ell)}\right) 
\end{align}  
where ${\rm diag}(\theta) \in \mathbb R^{n\times  n}$ contains learnable coefficients in each frequency domain and $\mathbf W^{(\ell)}$ is a shared weight matrix. Rather than performing spectral filtering, {\it spatial framelet} \cite{chendirichlet} convolution propagates the node information in  spatial domain with the following propagation rule (without activation):

\begin{align}\label{spatial_framelt_teacher}
    &\mathbf H^{(\ell +1)}  \!\!= \sum_{(r,j)\in \mathcal I} \mathcal W^\top_{r,j} \widehat{\mathbf A} \mathcal W_{r,j} \mathbf H^{(\ell)} \mathbf W^{(\ell)}_{r,j}.
\end{align}
In the rest of this paper, our analysis will mainly focus on spatial framelet in Eq.~\eqref{spatial_framelt_teacher}, simply referred to as framelet,
although we will also present some conclusions regarding spectral framelet in the later sections.

The GNNs simplification is first introduced in \cite{wu2019simplifying}, and the core idea is to reveal the fact that the hidden representation produced in each layer of GCN is averaged among the neighbours that are one hop away. This suggests after $\ell$ layers, graph nodes will aggregate the feature information from the nodes that are $\ell$-hops away in the graph, and such $\ell$-hop information is contained in the $\ell$-th power of the adjacency matrix (i.e., $\widehat{\mathbf{A}}^\ell$). Thus one can simplify the GCN propagation presented in Eq.~\eqref{eq_classic_gcn} with layer $\ell$ as: 
\begin{align}\label{simplified_gnn}
    \mathbf H^{(\ell + 1)} = \sigma \big( \widehat{\mathbf A}^\ell \mathbf H^{(0)} \mathbf W \big),
\end{align} 
where matrix $\mathbf W \in \mathbb R^{d_0 \times d_{\ell+1}}$. 
When the network directly transfers the node features to the labels, then $d_{\ell+1}=c$ is the dimension of the total number of classes. 
Thus, followed by the spirit of GNNs simplification, we show below the (linearized) simplified spatial framelet 
as:     
\begin{align}\label{simplified_framelet}
     \mathbf H^{(\ell + 1)}  =  
  \sum_{(r,j)\in\mathcal{I}} \left(\mathcal W_{r,j}^\top  \widehat{\mathbf A} \mathcal W_{r,j} \right)^\ell \mathbf H^{(0)} \mathbf W_{r,j},  
\end{align}
where 
all $\mathbf W_{r,j}$ are of size $d_0 \times c$. There is a significant difference between the simplified version Eq.~\eqref{simplified_framelet} and the application of $\ell+1$ layers of Eq.~\eqref{spatial_framelt_teacher}. In Eq.~\eqref{simplified_framelet}, we repeatedly apply the filter at the same scales, no crossing scale application.  In this paper, we limit $\ell$ up to $2$ for convenience in most cases. 

As we mentioned in the section for knowledge distillation, one of the keys of a successful knowledge distillation is to supply student model with sufficient graph information provided by the teacher model. We will show how a {MLP} based student model can get this task done, i.e., learning from the teacher model as Eq.~\eqref{simplified_framelet} and Eq.~\eqref{spatial_framelt_teacher} in the coming section.

\section{The Proposed Method}\label{propose_model}
The core idea of our method is to make student model sufficiently mimic teacher model, i.e., Eq.~\eqref{simplified_framelet} and Eq.~\eqref{spatial_framelt_teacher}. To do this, in this section, we first take the original framelet model Eq.~\eqref{spatial_framelt_teacher} as the teacher model example to propose a $\mathrm{MLP}$ based encoding and decoding scheme as a student model. 
Then we show how this student model imitates simplified framelet Eq.~\eqref{simplified_framelet}. Furthermore, we provide some discussions, including the relationship between these two models and their potential computational issues. Since this $\mathrm{MLP}$ based student model tends to unify graph knowledge, including both adjacency and node features, resulting a ``frameless''information propagation, we name our framelet distillation models as frameless multilayer perceptron original (FLMP-O) and frameless multilayer perceptron simplified (FMLP-S).



\subsection{FMLP-O} 
\subsubsection{Encoding Graph Knowledge}\label{section_model_encoding}
As we have illustrated before, we first take the original spatial framelet as the teacher model to show how FLMP-O is built. To encode graph knowledge, first we see that the framelet propagation rule in  Eq.~\eqref{spatial_framelt_teacher} is a conditional message-pass scheme in  framelet  domain. Therefore, the knowledge provided by the framelet model at the first layer ($\ell =1$) is: $\mathcal W^\top_{0,J} \widehat{\mathbf A}\mathcal W_{0,J} $, $\mathbf X$ and $\mathcal W^\top_{r,j} \widehat{\mathbf A}\mathcal W_{r,j}$ ($r=1, ..., L,j= 1, ..., J$). We directly feed this information to {MLP} to so that FLMP-O can capture the graph information. We note that this approach is parallel to any encoding process such as those in \cite{joulin2016fasttext,wang2014generalized,ke2020rethinking} that involves {MLP} as the encoder without any additional constraints. Specifically, we have: 
\begin{align}
    \mathbf Q_{0,J}^{(1)} &= \text{MLP}^{(1)}_{0,J}(\mathcal W^\top_{0,J} \widehat{\mathbf A} \mathcal W_{0,J}), \,\,\,\,\,\,
    \mathbf H^{(1)}_\mathbf X  = \text{MLP}^{(1)}_{\mathbf X}(\mathbf X)\,\,\,\,\,\,  \notag \\
    \mathbf Q_{r,j}^{(1)} & = \text{MLP}^{(1)}_{r,j}(\mathcal W^\top_{r,J} \widehat{\mathbf A}\mathcal W_{r,J}), \,\,\,\,\,\, r = 1, ..., L, j= 1, ..., J\label{first_encoding}
\end{align}
where the superscription $^{(1)}$ stands for the encoding process that targets on the first layer information of framelet and the encoded matrices $\mathbf Q_{0,J}^{(1)}$, $\mathbf Q_{r,j}^{(1)}$ and $\mathbf H^{(1)}$ are of size $n \times d_0$. It has been shown that $\mathrm{MLP}$ is a universal approximator in $L^p$ space and thus has the power to approximate any function defined in it with arbitrary precision \cite{arora2016understanding,royden1988real}. This is why it is chosen for encoding and decoding the graph knowledge. In addition, the computational benefits of involving {MLP} are obvious. For example, one may engage the training on mini-batch if the input graph is large. Furthermore,  the inclusion of {MLP} allows the student model to drop the relatively complicated graph information propagation process, i.e., the model designed in \cite{chen2020self} and \cite{yang2020distilling}, instead, the whole encoding process, including the decoding process presented in later sections, is done only by the matrix product individually to the graph information. Thus the model's architectural complexity is largely simplified.

\begin{figure*}
    \includegraphics[scale=0.35]{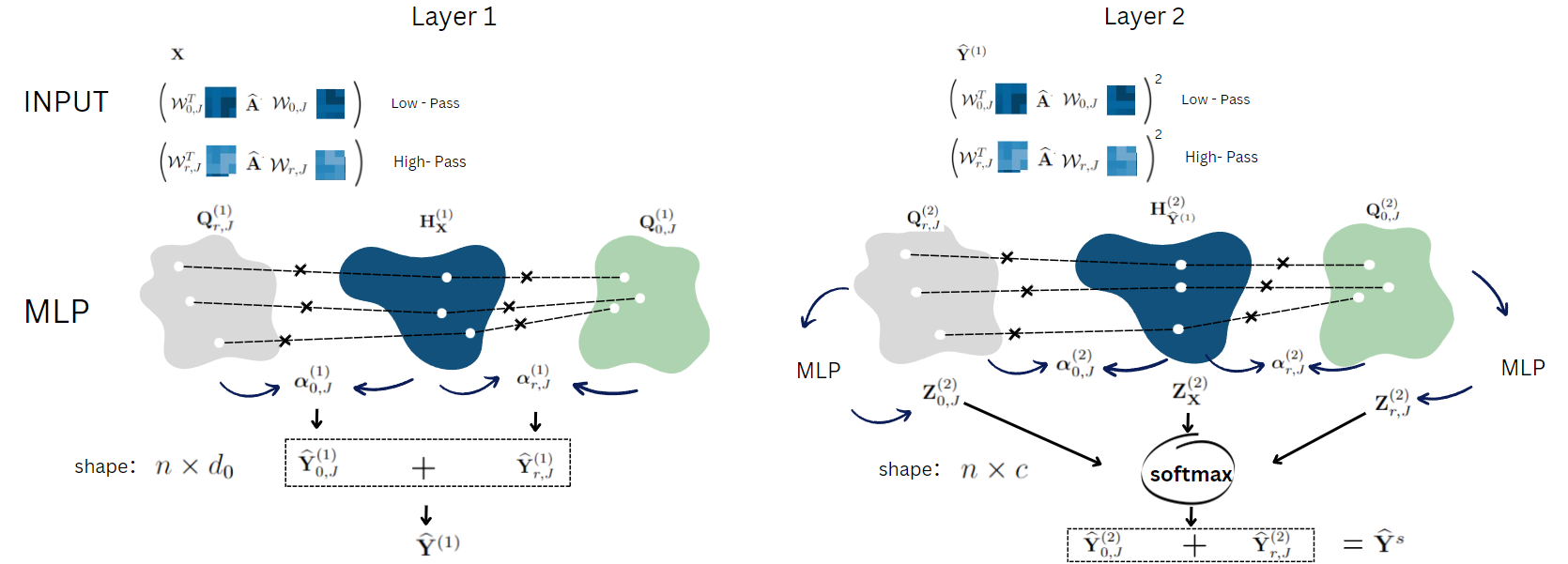}
    \caption{The figure above illustrates the workflow of FMLP-O, one of our student models. The model first imitates the learning  process of spatial framelet teacher model \ref{spatial_framelt_teacher} with only one low and high-pass filter by encoding the graph feature knowledge ($\mathbf X)$ and framelet decomposed adjacency knowledge, i.e., $\mathcal W_{r,j}^\top \widehat{\mathbf A}\mathcal W_{r,j}$. using $\mathrm{MLP}$. The relative importance of the encoded graph knowledge is then captured by the score vectors $\boldsymbol{\alpha}$ to fit both homophily and heterophily graph inputs in the encoding space. With the help of score vectors, the graph knowledge is processed.  The output of the first layer of FMLP-O ($\widehat{\mathbf Y}^{(1)})$ is then utilized as the graph feature knowledge together with the desired adjacency information, i.e. $(\mathcal W^\top_{r,j} \widehat{\mathbf A}\mathcal W_{r,j})^{2}$, to enable FMLP-O to mimic the second layer which outputs the final prediction of the student model. }
    \label{fig:FMLPO}
\end{figure*}

\subsubsection{Score-based Decoding}\label{section_decoding}
\paragraph{First-round decoding}\label{first_round_deconding}
Before we introduce any score-based decoding process, we point out that there is a unsolved question left in the previous section, that is: 
\textit{How can we ``digest'' the information generated from Eq. ~\eqref{first_encoding} and prepare it for the second layer?} 
To address this problem, an intermediate outcome based on the result of  Eq.~\eqref{first_encoding} has to be generated. Followed by the framelet reconstruction process, which sums the outcomes of both low-pass and high-pass predictions, we define score vectors: $\boldsymbol{\alpha}^{(1)}_{0,J}$ and $\boldsymbol{\alpha}^{(1)}_{r,j}$, $r = 1, ..., L,j= 1, ..., J$ 
all are of size $n$ generated from the following process: 
\begin{align}\label{first_score_vector}
    &\boldsymbol{\alpha}^{(1)}_{0,J} = \mathrm{sigmoid} ([\mathbf Q_{0,J}^{(1)}||\mathbf H^{(1)}_\mathbf X]\mathbf P^{(1)}_{0,J}+ b^{(1)}_{0,J}), \,\,\,  \notag \\ &\boldsymbol{\alpha}^{(1)}_{r,j} = \mathrm{sigmoid}([\mathbf Q_{r,j}^{(1)}||\mathbf H^{(1)}_\mathbf X]\mathbf P^{(1)}_{r,j}+ b^{(1)}_{r,j}),
\end{align}
where $\mathbf P^{(1)}_{0,J},\mathbf P^{(1)}_{r,j} \in \mathbb R^{2d_0 \times 1}$, $b^{(1)}_{0,J},b^{(1)}_{r,j} \in \mathbb R$ 
are trainable parameters and $[\cdot || \cdot]$ is the matrix concatenation. Therefore the outcome of the distillation for the first layer of the framelet can be computed as the summation of the student low-pass and high-pass sessions: 
\begin{align}\label{first_layer_decoding}
    &\widehat{\mathbf Y}^{(1)}_{0,J} =\boldsymbol{\alpha}^{(1)}_{0,J}\cdot\mathbf  H^{(1)}_\mathbf X + (1- \boldsymbol{\alpha}^{(1)}_{0,J}) \cdot\mathbf Q_{0,J}^{(1)}, \notag \\
    &  \widehat{\mathbf Y}^{(1)}_{r,j} =  \boldsymbol{\alpha}^{(1)}_{r,j}\cdot\mathbf H^{(1)}_\mathbf X + (1-\boldsymbol{\alpha}^{(1)}_{r,j}) \cdot\mathbf Q_{r,j}^{(1)}.   
\end{align}
We then take $\widehat{\mathbf Y}^{(1)} = \widehat{\mathbf Y}^{(1)}_{0,J}+ \sum_{r,j} \widehat{\mathbf Y}^{(1)}_{r,j}$ for the reconstruction process in the first decoding process in the student model. We point out that there are many other score generating schemes in the  literature such as \cite{song2020score,vahdat2021score} that focus on different learning tasks. The interpretation of ours is that the output of $\mathrm{sigmoid}$ function balances between the encoded feature and the graph information in  framelet domain. 
Furthermore, they connect to label consistency on graph 
\cite{li2018deeper} that is quantified by the so-called homophily index \cite{chien2020adaptive}. Homophily index indicates the uniformity of labels  in neighborhoods. A higher homophily index corresponds to the graph where more neighborhoods members share the same label. \guorm{presents the average proportion of whether two nodes with the same class label are connected with each other for any given graph.} When the graph is homophilic, both the encoded adjacency and feature information are all important to generate the node representations. In contrast, when the graph is heterophilic, the adjacency information should be less important after the propagation process, as a sharpening rather than smoothing effect  in terms of label prediction \cite{di2022graph} is preferred. Thus one may prefer to have a sparser (less connection) graph or relatively weak connection (smaller edge weights) between nodes \cite{rong2019dropedge}. 
 In this case, Eq.~\eqref{first_layer_decoding} tends to generate a relatively larger $\boldsymbol{\alpha}$ so that encoded node feature information dominates the learning outcome.

\paragraph{Second-round decoding}
Once we have obtained the outcome $\widehat{\mathbf Y}^{(1)}$ from the first round decoding, we set $\widehat{\mathbf Y}^{(1)}$ as the input node features to the second round encoding process of FLMP. Based on  Eq.~\eqref{first_layer_decoding}, $\widehat{\mathbf Y}^{(1)}$ is of size $n \times d_0$. According to the propagation scheme in framelet ($\ell =2$) in  
Eq.~\eqref{spatial_framelt_teacher}, the graph knowledge should rely on $(\mathcal W^\top_{0,J} \widehat{\mathbf A}\mathcal W_{0,J})^{2}$ and $(\mathcal W^\top_{r,j} \widehat{\mathbf A}\mathcal W_{r,j})^{2}$. Therefore, we have the following encoding process:
\begin{align}\label{second_encoding}
    \mathbf Q_{0,J}^{(2)} &= \text{MLP}^{(2)}_{0,J}\left((\mathcal W^\top_{0,J} \widehat{\mathbf A}\mathcal W_{0,J})^{2}\right), \;\mathbf H^{(2)}_{\widehat{\mathbf Y}^{(1)}}  = \text{MLP}^{(2)}_{\widehat{\mathbf Y}^{(1)}}(\widehat{\mathbf Y}^{(1)})\notag \\
    \mathbf Q_{r,j}^{(2)} &= \text{MLP}^{(2)}_{r,j}\left((\mathcal W^\top_{r,j} \widehat{\mathbf A}\mathcal W_{r,j})^{2}\right), \!\;\;\; r = 1, ..., L,j= 1, ..., J.
\end{align}
Here we emphasize that the superscription $^{(i)}$ stands for the layer index and the case without brackets $i$ means the matrix exponential as in  Eq.~\eqref{spatial_framelt_teacher}. Note that it is optional to encode  $\widehat{\mathbf Y}^{(1)}$ since it is initially generated from the first-round encoding process, although we still encode $\widehat{\mathbf Y}^{(1)}$ for clarity. Thus, the score vectors for the encoded knowledge from above are:
\begin{align}\label{second_score_vector}
    &\boldsymbol{\alpha}^{(2)}_{0,J} = \mathrm{Sigmoid} ([\mathbf Q_{0,J}^{(2)}||\mathbf H^{(2)}_{\widehat{\mathbf Y}^{(1)}}]\mathbf P^{(2)}_{0,J}+ b^{(2)}_{0,J}), \notag \\ &\boldsymbol{\alpha}^{(2)}_{r,j} = \mathrm{Sigmoid}([\mathbf Q_{r,j}^{(2)}||\mathbf H^{(2)}_{\widehat{\mathbf Y}^{(1)}}]\mathbf P^{(2)}_{r,j}+ b^{(2)}_{r,j}),
\end{align}
Different from the first round encoding and decoding process, we shall drive the student model  to match the teacher model after the second decoding process. After the score vectors generated in  Eq.~\eqref{second_score_vector}, we decode all the information encoded from the second round encoding  Eq.~\eqref{second_encoding} to $c$ dimensional space, where $c$ is the number of classes. This is implemented by the following:
\begin{align}\label{label_dimension_encoding}
&\mathbf Z^{(2)}_{0,J} =  \text{MLP}^{(2)}_{\mathbf Z_{0,J}}\left(\mathbf Q_{0,J}^{(2)}\right), \notag \\ 
&\mathbf Z_{\mathbf X}^{(2)} = \text{MLP}^{(2)}\left(\mathbf H^{(2)}_{\widehat{\mathbf Y}^{(1)}}\right), 
\mathbf Z^{(2)}_{r,j} = \text{MLP}^{(2)}_{\mathbf Z_{r,j}}\left(\mathbf Q_{r,j}^{(2)}\right),
\end{align}
where 
$\mathbf Z_{\mathbf X}^{(2)}$ is generated by inputting the node presentation from the first decoding to a $\text{MLP}$ named as $\text{MLP}^{(2)}$. Thus, we have both $\mathbf Z^{(2)}_{0,J}$, $\mathbf Z^{(2)}_{r,j}$ and $\mathbf Z_{\mathbf X}^{(2)}$ of size $n \times c$. Analogously, we assign both $\boldsymbol{\alpha}^{(2)}_{0,J}$ and $\boldsymbol{\alpha}^{(2)}_{r,j}$ to balance the importance between decoded node feature in $\mathbf Z_{\mathbf X}^{(2)}$ and the framelet based graph connectivity in $\mathbf Z^{(2)}_{0,J}$ and $\mathbf Z^{(2)}_{r,j}$, and generate the final output of the FLMP, that is: 
\begin{align}\label{second_round_final}
  &\widehat{\mathbf Y}^{(2)}_{0,J} =  \boldsymbol{\alpha}^{(2)}_{0,J}\cdot \mathbf Z^{(2)}_\mathbf X + (1- \boldsymbol{\alpha}^{(2)}_{0,J}) \cdot \mathbf Z_{0,J}^{(2)} , \,\,\, 
    \,\,\, \notag  \\ 
    &\widehat{\mathbf Y}^{(2)}_{r,j} = \boldsymbol{\alpha}^{(2)}_{r,j}\cdot \mathbf Z^{(2)}_\mathbf X + (1-\boldsymbol{\alpha}^{(2)}_{r,j})\cdot  \mathbf Z_{r,j}^2, \notag \\
    & \widehat{\mathbf Y}^s = \mathrm{Softmax}\left(\widehat{\mathbf Y}^{(2)}_{0,J}+ \sum_{(r,j)\in\mathcal{I}} \widehat{\mathbf Y}^{(2)}_{r,j}\right),
\end{align}
where the superscription $^s$ stands for the variable associated with the student model. Note that it is necessary to generate second score vectors  $\boldsymbol{\alpha}^{(2)}_{0,J}$ and $\boldsymbol{\alpha}^{(2)}_{r,j}$ as their functionality is to balance the importance between encoded nodes features and the graph information that utilized in the second layer of framelet. The model is trained by minimizing a loss function with two components. One is the cross-entropy loss, denoted as $\mathcal L_{\mathrm{Label}}$, between  true labels $\mathbf Y$ and the student prediction $\widehat{\mathbf Y}^s$. The other is   the KL-divergence, denoted as $\mathrm{KL}$, between student and teacher's predictions \cite{zhang2021graph}. The complete cost function is: 
\begin{align}\label{training_function}
    \mathcal L = \lambda \mathcal L_{\mathrm{Label}}\left(\widehat{\mathbf Y}^s, \mathbf Y\right) + (1-\lambda) \mathrm{KL}\left(\widehat{\mathbf Y}^\top, \widehat{\mathbf Y}^s\right),
\end{align}
where $\lambda\in[0,1]$ is a hyperparameter that balances the relative importance between two cost components. 
To summarize, \guorm{the model structure of} FMLP-O \guorm{is built by gathering} initiates the first round encoding and decoding process (Eq.~\eqref{first_encoding}-Eq.~\eqref{first_layer_decoding}) to distill graph knowledge and imitate the learning process of the first layer of spatial framelet; with its learning outcome generated from the reconstruction in $\widehat{\mathbf Y}^{(1)}$, FMLP-O conducts the second round encoding and decoding for prediction and imitation  through Eq.~\eqref{second_encoding} -Eq.~\eqref{training_function}. 
Fig.~\ref{fig:FMLPO} shows the learning process of FMLP-O.

\subsection{FLMP-S and Discussions}\label{FLMPS_and_discussions}
Simplified framelet Eq.~\eqref{simplified_framelet} directly propagates the graph information utilized in the last layer \guorm{(i.e., $\ell =2$)} of framelet together with the original graph signals to generate the final outcome. Therefore, in terms of the design of FLMP-S, one can utilize the process of second-round encoding and decoding from Eq.~\eqref{second_encoding}- Eq.~\eqref{second_round_final}  with signal reconstruction\guorm{ to generate the learning outcome}. The only difference is that, in FLMP-O, the input feature matrix for Eq.~\eqref{second_encoding} is the reconstructed feature matrix $\widehat{\mathbf Y}^{(1)}$ from the first encoding,  whereas, in FLMP-S, we encode the original feature matrix $\mathbf X$ directly and utilize it with the graph information that propagated in the last layer of the original framelet, i.e., $(\mathcal W^\top_{0,J} \widehat{\mathbf A} \mathcal W_{0,J})^\ell$ and $(\mathcal W^\top_{r,j} \widehat{\mathbf A} \mathcal W_{r,j})^\ell$. 


\subsection{Dirichlet Energy for Teacher Models}
It has been shown that a higher energy of node representation is preferred when the graph is heterophilic. However, due to the propagation difference between teacher models, their adaption power to  heterophilic graph tends to be different. In this section, we discuss the difference between the proposed teacher models from energy perspective. We first define the graph Dirichlet energy. 
\begin{defn}[Graph Dirichlet Energy]
Given node embedding matrix $\mathbf H^{(\ell)}\in \mathbb R^{n\times d_\ell}$ learned from GNN at layer $\ell$, 
the Dirichlet energy $E(\mathbf H^{(\ell)})$ is defined as: 
\begin{align*}
    E(\mathbf H^{(\ell)}\!)\! &=  \frac{1}{2} \trace((\mathbf H^{(\ell)})^\top \widehat{\mathbf L} \mathbf H^{(\ell)}) 
    = \frac{1}{4} \sum_{i,j\in\mathcal{V}} a_{ij} \| \mathbf h_i^{(\ell)} /\sqrt{d_i} - \mathbf h_j^{(\ell)} /\sqrt{d_j} \|^2 \notag,
\end{align*}
where $d_i$ and $d_j$ are the degrees of node $i$ and $j$. 
\end{defn}
Based on Proposition 1 in \cite{chendirichlet}, the total amount of graph Dirichlet energy in one specific layer \guorm{(i.e., $\ell$)} is conserved under framelet decomposition.\guorm{ (applied on the node representation at that layer). Specifically,} For node representation 
$\mathbf H^{(\ell)} \in \mathbf R^{n\times d_\ell}$, let 
\[
E_{0,J}(\mathbf H^{(\ell)}) = \frac{1}{2} \trace \big( (\mathcal W_{0,J} \mathbf H^{(\ell)})^\top \widehat{\mathbf L} \mathcal W_{0,J} \mathbf H^{(\ell)} \big),
\]
and similarly 
\[
E_{r,j}(\mathbf H^{(\ell)}) = \frac{1}{2} \trace \big( (\mathcal W_{r,j} \mathbf H^{(\ell)})^\top \widehat{\mathbf L} \mathcal W_{r,j} \mathbf H^{(\ell)} \big)
\]
for $r=1,...,L, j = 0, 1,...,J$. Then we have 
\[
E(\mathbf H) = 
\sum_{(r,j)\in\mathcal{I}} E_{r,j}(\mathbf H^{(\ell)}),
\]
where $E_{0,J}$ and $E_{r,j}$ are the Dirichlet energy in low and high frequency domains of framelets. In \cite{chendirichlet}, an energy perturbation scheme is proposed to increase total Dirichlet energy for one specific layer in Eq. ~\eqref{spatial_framelt_teacher} by slightly decreasing the energy from low-pass domain while increasing the energy from high-pass domain. That is, for a positive constant $\epsilon$ we have:  

\begin{align}\label{e:energypert}
    E^{\epsilon}(\mathbf H) &= \frac{1}{2} \trace  \big( (\mathcal W_{0,J} \mathbf H)^\top  (\widehat{\mathbf L} + \epsilon \mathbf I_n) \mathcal W_{0,J} \mathbf H  \big) + 
    \frac{1}{2} \sum_{(r,j)\in\mathcal{I} \backslash (0, J)} \trace \big( (\mathcal W_{r,j} \mathbf H)^\top (\widehat{\mathbf L} - \epsilon \mathbf I_n ) \mathcal W_{r,j} \mathbf H \big). 
\end{align} 
If the Haar-type filter is applied in spatial framelet Eq.~\eqref{spatial_framelt_teacher}, one can show that $E^\epsilon(\mathbf H) > E(\mathbf H)$ for any $\epsilon > 0$ due to the energy gap between low-pass and high-pass filters\guorm{ in this case}.
Now we verify that this energy conservation property may no longer be held in simplified framelet in~Eq.~\eqref{simplified_framelet}. Instead, it depends on the eigen-distribution of the normalized graph Laplacian $\widehat{\mathbf L}$. For a given layer $\ell$, one can simply define  low-pass and high-pass Dirichlet energy under the decomposition induced from simplified framelet analogously as:  
\[
E_{0,J}(\mathbf H^{(\ell)}) = \frac{1}{2} \trace \big( \mathcal W_{0,J}^{\ell} \mathbf H^{(\ell)})^\top \widehat{\mathbf L}( \mathcal W_{0,J}^{\ell} \mathbf H^{(\ell)} \big),
\]
and  
\[
E_{r,j}(\mathbf H^{(\ell)}) = \frac{1}{2} \trace \big( (\mathcal W_{r,j}^{\ell} \mathbf H^{(\ell)})^\top \widehat{\mathbf L} (\mathcal W_{r,j}^{\ell} \mathbf H^{(\ell)} \big)
\]for $r=1,...,L, j = 0, 1,...,J$, since the decomposition induced from simplified framelet at layer $\ell$ is $\mathcal W_{0,J}^{\ell}$ and $\mathcal W_{r,j}^{\ell}$. Let $E^\mathcal S$  be the total Dirichlet energy in simplified framelet  at layer $\ell$. To avoid cluttered notations, 
we use $\mathbf H$ instead of $\mathbf H^{(\ell)}$ in the following explanations. 
Referring to Eq.~\eqref{eq_w0j} -- Eq.~\eqref{eq_wrj} for the meaning of $\Lambda_{r,j}$, we have 
\begin{align}
    &E^\mathcal S (\mathbf H) = \!\! \mathrm{Tr}\sum_{(r,j)\in\mathcal{I}} \left((\mathcal W_{r,j}^\top)^{\ell} \mathbf H \right)^\top \!\! \widehat{\mathbf L}\left(\mathcal W_{r,j}^\ell \mathbf H   \right) \notag \\
    &= \mathrm{Tr}\left(\sum_{(r,j)\in\mathcal{I}}  \mathbf H^\top \left(\mathbf U^\top \mathbf  \Lambda_{r,j} \mathbf {U}\right)^\ell 
    \mathbf U^\top \mathbf \Lambda \mathbf U\left(\mathbf U^\top \mathbf  \Lambda_{r,j} \mathbf {U}\right)^\ell\mathbf H \right) \notag \\
    & = \mathrm{Tr} \left(\sum_{(r,j)\in\mathcal{I}} (\mathbf U \mathbf H)^\top \mathbf  \Lambda_{r,j}^{2\ell}  \mathbf \Lambda (\mathbf {U}\mathbf H) \right) \label{Eq:22}
\end{align}

Let us consider $\widehat{\mathbf H} = \mathbf U\mathbf H$, i.e., Fourier framelet signals, and denote its columns as $\widehat{\mathbf h}_c$ ($c=1, ..., d_{\ell}$). Then Eq.~\eqref{Eq:22} leads to
\begin{align*}
E^\mathcal S (\mathbf H) &= \sum^{d_{\ell}}_{c=1} \sum_{(r,j)\in\mathcal{I}} \sum_i    \widehat{\mathbf h}^2_c(i)  (\lambda^i_{r,j})^{2\ell} \lambda_i  
= \sum^{d_{\ell}}_{c=1}\sum_i  \lambda_i \widehat{\mathbf h}^2_c(i) \sum_{(r,j)\in\mathcal{I}}  (\lambda^i_{r,j})^{2}(\lambda^i_{r,j})^{2\ell-2}.  
\end{align*}
Clearly, when $\ell =1$ we can recover the total Dirichlet energy proposed in \cite{chendirichlet}. This comes from the fact that $(\lambda^i_{r,j})^{2\ell-2} = 1$ and $\sum_{(r,j)\in\mathcal{I}}  (\lambda^i_{r,j})^{2} = 1$ for all $i$  due to the tightness of framelet decomposition, a.k.a, $\sum_{r,j} \mathbf \Lambda_{r,j}^2 = \mathbf I_n$.  Thus the above equation can be further simplified to
\[
\sum^{d_{\ell}}_{c=1}\sum_i  \lambda_i \widehat{\mathbf h}^2_c(i) = \mathrm{Tr}\left(\mathbf H^\top \mathbf U^\top\mathbf  \Lambda \mathbf U \mathbf H\right) = E(\mathbf H).
\]However, when $\ell > 1$, whether the energy is conserved depends on the distribution of eigenvalues of $\widehat{\mathbf L}$. Hence to increase  Dirichlet energy, one may resort to $\epsilon$ in Eq.~\eqref{e:energypert} with different signs, 
since the approach presented in \cite{chendirichlet} may not always be able to lift the graph Dirichlet energy in simplified framelet.
Nevertheless, we still observe in our experiments  that by assigning a proper quantity of energy perturbation (denoted as $\boldsymbol{\epsilon}^S$ for simplified framelet), the simplified framelet still achieves a remarkable outcome in heterophily graphs. 

\subsection{Alleviate Over-Squashing}  
Recall that we set $\ell$ up to $2$ in most of practical implementations. Here we discuss a potential issue when $\ell$ becomes large, say $\ell\gg2$ so that the information smears out in propagation. It is known as over-squashing. The over-squashing issue is a recently identified problem within GNN learning process. An intuitive description of the over-squashing issue is that the long-term (multi-hop) connectivity importance (dependence) between nodes tends to be diluted as the number of layers of a GNNs model becomes large. In \cite{topping2021understanding}, the over-squashing phenomenon was quantified by the norm of the Jacobian, also known as sensitivity, between the node representations obtained by a GNN model in any one layer and  the original data. One can formulate it node-wise, i.e. the sensitivity between node $v$ at layer $\ell$ to the node $u$ in $\mathbf X$, which is $\mathbf H^{(0)}$, is $\left\|\frac{\partial \mathbf h^{(\ell)}_v}{\partial \mathbf x_u} \right\|_2$. Here we show a more detailed version of sensitivity by measuring the quantity of $\left |\frac{d \mathbf h^{(\ell)}_{v,q}}{d \mathbf x_{u,p}} \right|$ which is the sensitivity between one specific element of feature $\mathbf h_v$ at layer $\ell$ and one element of $\mathbf x_u$ at layer 0 for both of our teacher models. The following lemma shows that such quantity critically depends on the  weights of the graph adjacency matrix. Without loss of generality, 
we start with the spatial framelet model Eq.~\eqref{spatial_framelt_teacher}, and later generalize for simplified framelet Eq.~\eqref{simplified_framelet}. 

\begin{lem}\label{spatial_framelet_over_squash}
Consider the propagation in spatial framelet Eq.~\eqref{spatial_framelt_teacher}: $\mathbf H^{(\ell +1)}  = 
\sum_{(r,j)\in\mathcal{I}} \mathcal W_{r,j}^\top  \widehat{\mathbf A} \mathcal W_{r,j} \mathbf H^{(\ell)} \mathbf W^{(\ell)}_{r,j}$ at layer $\ell +1$ with  $\mathbf H^{(0)} = \mathbf X$. Let  
$\widehat{\mathbf A}_{r,j} = \mathcal W_{r,j}^\top \widehat{\mathbf{A}} \mathcal W_{r,j}$ be the graph adjacency information in various frequency domains, 
$\mathbf h_{v,q}^{(\ell+1)}, q \in [1, d_{\ell+1}]$, the $q$-th component of $v$-node feature at layer $\ell+1$, and $\mathbf W^{(\ell)}_{r,j}(p,q)$  the $(p,q)$-th entry of $\mathbf W^{(\ell)}_{r,j}$.  Suppose $|\mathbf W^{(\ell)}_{r,j}(p,q)| \leq \beta_{r,j}$ for all $\ell$. Then for any pair of nodes $u$, $v$ 
which are in the standard shortest path that equals to $\ell+1$, we have $\left |\frac{d \mathbf h^{(\ell+1)}_{v,q}}{d \mathbf x_{u,p}} \right| \leq (d\beta)^{\ell+1}\left(\sum_{(r,j)\in\mathcal{I}} \left|\widehat{\mathbf A}_{r,j}\right|\right)^{\ell+1}_{u,v} $, where $d = \max\{d_0, d_1, ..., d_{\ell+1}\}$ and $\beta = \max\{ \beta_{r,j}: (r,j)\in\mathcal{I}\}$.
\end{lem}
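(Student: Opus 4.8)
The plan is to exploit the fact that the propagation in Eq.~\eqref{spatial_framelt_teacher} is \emph{linear} (no activation is applied), so each $\mathbf h^{(\ell+1)}_{v,q}$ is an explicit linear function of the input features, and the sensitivity $\frac{d\mathbf h^{(\ell+1)}_{v,q}}{d\mathbf x_{u,p}}$ is exactly the coefficient of $\mathbf x_{u,p}$ in that expression. First I would write the layer rule entrywise,
\[
\mathbf h^{(\ell+1)}_{v,q} = \sum_{(r,j)\in\mathcal I}\ \sum_{w=1}^{n}\ \sum_{p'=1}^{d_\ell} (\widehat{\mathbf A}_{r,j})_{v,w}\,\mathbf h^{(\ell)}_{w,p'}\,\mathbf W^{(\ell)}_{r,j}(p',q),
\]
and differentiate via the chain rule to obtain the recursion
\[
\frac{d\mathbf h^{(\ell+1)}_{v,q}}{d\mathbf x_{u,p}} = \sum_{w,p'}\Big(\sum_{(r,j)\in\mathcal I}(\widehat{\mathbf A}_{r,j})_{v,w}\,\mathbf W^{(\ell)}_{r,j}(p',q)\Big)\,\frac{d\mathbf h^{(\ell)}_{w,p'}}{d\mathbf x_{u,p}},
\]
with base case $\frac{d\mathbf h^{(0)}_{v,q}}{d\mathbf x_{u,p}} = \delta_{vu}\delta_{qp}$.

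Next I would introduce the nonnegative matrix $\mathbf B := \sum_{(r,j)\in\mathcal I}|\widehat{\mathbf A}_{r,j}|$, where $|\cdot|$ denotes entrywise absolute value; this matrix is symmetric because each $\widehat{\mathbf A}_{r,j}=\mathcal W_{r,j}^\top\widehat{\mathbf A}\mathcal W_{r,j}$ is symmetric (as $\widehat{\mathbf A}$ is). I would then prove by induction on $\ell$ the sharper claim $\big|\frac{d\mathbf h^{(\ell)}_{v,q}}{d\mathbf x_{u,p}}\big| \le (d\beta)^{\ell}(\mathbf B^{\ell})_{u,v}$ for all indices $v,q$, the base case being immediate since $\delta_{vu}\delta_{qp}\le\delta_{uv}=(\mathbf B^0)_{u,v}$. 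In the inductive step I apply the triangle inequality to the recursion, bound each weight entry by $|\mathbf W^{(\ell)}_{r,j}(p',q)|\le\beta_{r,j}\le\beta$, and invoke the induction hypothesis, whose bound is independent of $p'$, to carry out the inner sum over the $d_\ell\le d$ feature indices; this produces one factor of $d$ and one factor of $\beta$ per layer.

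The final step collapses the remaining spatial sum into a matrix power. After the above estimates one reaches
\[
\Big|\frac{d\mathbf h^{(\ell+1)}_{v,q}}{d\mathbf x_{u,p}}\Big| \le (d\beta)^{\ell+1}\sum_{w}\mathbf B_{v,w}\,(\mathbf B^{\ell})_{u,w} = (d\beta)^{\ell+1}\,(\mathbf B^{\ell+1})_{u,v},
\]
where the last equality uses $\sum_{(r,j)}|(\widehat{\mathbf A}_{r,j})_{v,w}| = \mathbf B_{v,w}$ together with the symmetry $\mathbf B_{v,w}=\mathbf B_{w,v}$. Taking $v,u$ with shortest path $\ell+1$ then yields the stated inequality.

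The bulk of the argument is bookkeeping, and I expect the only genuinely delicate points to be the following. First, the induction hypothesis must be posed in precisely the form $(d\beta)^{\ell}(\mathbf B^{\ell})_{u,v}$, so that the spatial sum telescopes cleanly into $(\mathbf B^{\ell+1})_{u,v}$ via the symmetry of $\mathbf B$; a naive formulation that does not keep the matrix-power structure intact will not close. Second, one must account correctly for the hidden-dimension summation to extract the full power $d^{\ell+1}$ rather than a single factor of $d$. The main conceptual hurdle is recognizing that the family $\{|\widehat{\mathbf A}_{r,j}|\}$ should be aggregated into the single symmetric operator $\mathbf B$, after which the walk-counting interpretation of matrix powers delivers the bound; note that the hypothesis $d(u,v)=\ell+1$ is not needed for the inequality itself but serves to localize and interpret it in the over-squashing setting.
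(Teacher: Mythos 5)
Your proof is correct and follows essentially the same route as the paper's: both exploit the linearity of the propagation, bound each weight entry by $\beta$ and each hidden-dimension sum by $d$, and collapse the remaining spatial sums into the $(\ell+1)$-st power of $\sum_{(r,j)\in\mathcal I}|\widehat{\mathbf A}_{r,j}|$ (using its symmetry to swap the indices $u,v$). The only difference is presentational — you organize the argument as an induction on the layer index with the aggregated matrix $\mathbf B$ carried through, whereas the paper fully unrolls the recursion into a sum over index sequences $(r_1,j_1),\dots,(r_{\ell+1},j_{\ell+1})$ and then re-factors; your packaging is arguably cleaner but the underlying computation is identical.
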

\begin{proof}
First, we can simplify the propagation of spatial framelet in Eq.~\eqref{spatial_framelt_teacher} as $\mathbf H^{(\ell+1)} = \sum_{(r,j) \in \mathcal I}\widehat{\mathbf A}_{r,j} \mathbf H^{(\ell)} \mathbf W_{r,j}^{(\ell)}$. Then we have: 
\begin{align*}
    &\mathbf h_{v,q}^{(\ell+1)} = \sum_{(r_i,j_1)\in \mathcal I }\sum_{w_1}\sum_{p_1} a_{r,j_1}(v,w_1)\mathbf h^{(\ell)}_{w_1,p_1}\cdot \mathbf W_{r_i,j_1}^{(\ell)}(p_1,q) \\
    & =  \sum_{(r_1,j_1)\in \mathcal I }\sum_{w_1,p_1}a_{r,j_1}(v,w_1) 
    \left(\sum_{(r_2,j_2)\in \mathcal I } \sum_{w_2,p_2} 
    a_{r_2,j_2}(w_1,w_2)\mathbf h_{w_2,p_2}^{(\ell-1)}\mathbf W^{(\ell-1)}_{r_2,j_2}(p_2,p_1) \big)\mathbf W_{r_i,j_1}^{(\ell)}(p_1,q)\right) \\
    & = \sum_{(r_1,j_1)\in \mathcal I}\sum_{(r_2,j_2)\in \mathcal I}\cdots\sum_{(r_{\ell+1)},j_{\ell+1})\in \mathcal I}\sum_{w_{\ell+1}, p_{\ell+1}} \\
    & a_{r_1,j_1}(v,w_1)a_{r_2,j_2}(w_1,w_2)\cdots a_{r_{\ell+1},j_\ell}\mathbf h^{(0)}_{w_{\ell+1},p_{\ell+1}} 
    \mathbf W^{(0)}_{r_{\ell+1},j_{\ell+1}}(p_{\ell+1}, q_\ell)\mathbf W^{(1)}_{r_{\ell},j_{\ell}}(p_{\ell}, q_{\ell-1})\cdots \mathbf W^{(\ell)}_{r_{1},j_{1}}(p_{1}, q).
\end{align*}
Hence
\begin{align*}
 &\frac{d\mathbf h_{v,q}^{(\ell+1)}}{d\mathbf x_{u,p}} = \sum_{(r_1,j_1)\in \mathcal I}\sum_{(r_2,j_2)\in \mathcal I}\cdots\sum_{(r_{\ell+1)},j_{\ell+1})\in \mathcal I}\sum_{w_{\ell+1}, p_{\ell+1}}  a_{r_1,j_1}(v,w_1)a_{r_2,j_2}(w_1,w_2)\cdots a_{r_{\ell+1},j_{\ell+1}}(w_{\ell},w_{\ell+1}) \\
 &\frac{d\mathbf h^{(0)}_{w_{\ell+1},p_{\ell+1}}}{d\mathbf x_{u,p}} 
    \mathbf W^{(0)}_{r_{\ell+1},j_{\ell+1}}(p_{\ell+1}, q_\ell)\mathbf W^{(1)}_{r_{\ell},j_{\ell}}(p_{\ell}, q_{\ell-1})\cdots \mathbf W^{(\ell)}_{r_{1},j_{1}}(p_{1}, q).   
\end{align*}
As $\mathbf H^{(0)}=\mathbf X$, we have
\begin{align*}
    \frac{d\mathbf h^{(0)}_{w_{\ell+1},p_{\ell+1}}}{d\mathbf x_{u,p}} =  \delta_{w_{\ell+1},u} \delta_{p_{\ell+1},p},
\end{align*}
where $\delta$ is the Kronecker delta. 
Thus we further have, under the boundedness assumption $| \mathbf W^{(\ell)}_{r,j} | \leq \beta_{r,j}$,  
\begin{align*}
    &\left |\frac{d \mathbf h^{(\ell+1)}_{v,q}}{d \mathbf x_{u,p}} \right| \leq d^{\ell+1}\sum_{(r_1,j_1)\in \mathcal I}\sum_{(r_2,j_2)\in \mathcal I}\cdots\sum_{(r_{\ell+1},j_{\ell+1})\in \mathcal I} \sum_{w_1, w_2, ..., w_{\ell}} \\
    & \left|a_{r_1,j_1}(v,w_1)\right|\left|a_{r_2,j_2}(w_1,w_2)\right|\cdots  {\left|a_{r_{\ell+1},j_{\ell+1}}(w_\ell,u)\right|}  
     \beta_{r_{1}, j_{1}}\beta_{r_{2}, j_{2}}\cdots\beta_{r_{\ell+1}, j_{\ell+1}} \\
    & =\sum_{(r_1,j_1)\in \mathcal I}\sum_{(r_2,j_2)\in \mathcal I}\cdots\sum_{(r_{\ell+1},j_{\ell+1})\in \mathcal I}\beta_{r_{1}, j_{1}}\beta_{r_{2}, j_{2}}\cdots\beta_{r_{\ell+1}, j_{\ell+1}}
     \sum_{w_1}\cdots \sum_{w_{\ell}}\left|a_{r_1,j_1}(v,w_1)\right|\cdots \left|a_{r_{\ell+1},j_{\ell+1}}(w_\ell,u)\right| d^{\ell+1}\\
    & = \sum_{(r_1,j_1)\in \mathcal I}\sum_{(r_2,j_2)\in \mathcal I}\cdots\sum_{(r_{\ell+1},j_{\ell+1})\in \mathcal I}\beta_{r_{1}, j_{1}}\beta_{r_{2}, j_{2}}\cdots\beta_{r_{\ell+1}, j_{\ell+ 1}} 
    \left(|\widehat{\mathbf A }_{r_1,j_1}|\cdot |\widehat{\mathbf A }_{r_2,j_2}| \cdots |\widehat{\mathbf A }_{r_{\ell+1},j_{\ell+1}}|\right)_{v,u} d^{\ell+1}\\
    &\leq d^{\ell+1}\beta^{\ell+1}\left(\sum_{(r_1,j_1)\in\mathcal I} \sum_{(r_1,j_1)\in\mathcal I} \cdots \sum_{(r_{\ell+1},j_{\ell+1})\in \mathcal I}\right. 
    \left.|\widehat{\mathbf A }_{r_1,j_1}|\cdot |\widehat{\mathbf A }_{r_2,j_2}| \cdots |\widehat{\mathbf A }_{r_{\ell+1},j_{\ell+1}}|\right)_{v,u}\\
    &= d^{\ell+1} \beta^{\ell+1}\left(\sum_{r,j} |\widehat{\mathbf A}_{(r,j)\in\mathcal{I}}|\right)^{\ell+1}_{u,v}.
\end{align*} 

The first inequality comes from the fact that the entries of $\widehat{\mathbf A}_{r,j}$ may not be positive. This completes the proof.
\end{proof}
Based on the above derivation, we see that the sensitivity quantified by $\left |\frac{d \mathbf h^{(\ell+1)}_{v,q}}{d \mathbf x_{u,p}} \right|$ simply depends on $\widehat{\mathbf A}_{r,j}$ when all other variables are fixed. Similar result for simplified framelet in Eq.~\eqref{simplified_framelet} is a merely a straightforward generalisation as follows. 
\begin{cor}\label{simplified_framelet_over_squash}
For simplified framelet Eq.~\eqref{simplified_framelet}, without considering activation function \guorm{(i.e., $\mathrm {softmax}$)} applied in the output layer for prediction,   we have: $\left |\frac{d \mathbf h^{(\ell+1)}_{v,q}}{d \mathbf x_{u,p}} \right| \leq d_{0}\beta \left(\sum_{(r,j) \in \mathcal I} |\widehat{\mathbf A}_{r,j}|^{\ell+1}\right)_{u,v}$.
\end{cor}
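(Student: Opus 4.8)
The plan is to exploit the fact that the simplified framelet in Eq.~\eqref{simplified_framelet} is \emph{not} a layer-by-layer recursion but a single closed-form map from the input $\mathbf H^{(0)}=\mathbf X$ to the output, so that its Jacobian can be read off directly rather than unrolled as in Lemma~\ref{spatial_framelet_over_squash}. Writing $\widehat{\mathbf A}_{r,j}=\mathcal W_{r,j}^\top\widehat{\mathbf A}\mathcal W_{r,j}$ as in the Lemma, the relevant layer reads $\mathbf H^{(\ell+1)}=\sum_{(r,j)\in\mathcal I}(\widehat{\mathbf A}_{r,j})^{\ell+1}\mathbf X\,\mathbf W_{r,j}$, so componentwise $\mathbf h^{(\ell+1)}_{v,q}=\sum_{(r,j)\in\mathcal I}\sum_{w}\sum_{p'}\big[(\widehat{\mathbf A}_{r,j})^{\ell+1}\big]_{v,w}\,\mathbf x_{w,p'}\,\mathbf W_{r,j}(p',q)$. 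First I would differentiate this expression with respect to $\mathbf x_{u,p}$.

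Because the input enters linearly and only through the single factor $\mathbf X$, the derivative $\tfrac{d\mathbf x_{w,p'}}{d\mathbf x_{u,p}}=\delta_{w,u}\delta_{p',p}$ (the same Kronecker-delta step used at the base case of the Lemma) collapses the sums over $w$ and $p'$, leaving exactly one surviving term per scale: $\tfrac{d\mathbf h^{(\ell+1)}_{v,q}}{d\mathbf x_{u,p}}=\sum_{(r,j)\in\mathcal I}\big[(\widehat{\mathbf A}_{r,j})^{\ell+1}\big]_{v,u}\,\mathbf W_{r,j}(p,q)$. I would then take absolute values, apply the boundedness hypothesis $|\mathbf W_{r,j}(p,q)|\le\beta_{r,j}\le\beta$, and use the entrywise inequality $\big|[(\widehat{\mathbf A}_{r,j})^{\ell+1}]_{v,u}\big|\le\big[|\widehat{\mathbf A}_{r,j}|^{\ell+1}\big]_{v,u}$ (entries of a matrix power are dominated by the corresponding entries of the power of the entrywise-absolute-value matrix, exactly the device that produces the first inequality in the Lemma). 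Summing over $(r,j)\in\mathcal I$ then yields $\big|\tfrac{d\mathbf h^{(\ell+1)}_{v,q}}{d\mathbf x_{u,p}}\big|\le\beta\big(\sum_{(r,j)\in\mathcal I}|\widehat{\mathbf A}_{r,j}|^{\ell+1}\big)_{u,v}$, with the stated factor $d_0$ absorbing the single input-feature-dimension contraction.

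The main point — and the reason this is only a \emph{straightforward} generalisation rather than a repeat of the Lemma's calculation — is structural: the simplified framelet reuses a single shared weight $\mathbf W_{r,j}$ per scale and applies each per-scale operator $\widehat{\mathbf A}_{r,j}$ repeatedly \emph{within the same scale}, with no crossing-scale products. Consequently the bound degrades only to a single $\beta$ (not $\beta^{\ell+1}$) and, crucially, to a \emph{sum of powers} $\sum_{(r,j)}|\widehat{\mathbf A}_{r,j}|^{\ell+1}$ rather than the \emph{power of a sum} $(\sum_{(r,j)}|\widehat{\mathbf A}_{r,j}|)^{\ell+1}$ appearing in Lemma~\ref{spatial_framelet_over_squash}; this is the algebraic shadow of the ``no crossing scale application'' remark following Eq.~\eqref{simplified_framelet}. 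I do not anticipate a serious obstacle: the only care needed is the bookkeeping of the layer/exponent convention (ensuring the power attached to $\widehat{\mathbf A}_{r,j}$ matches the layer index $\ell+1$ used in the statement) and verifying that the entrywise absolute-value domination survives the matrix power, both of which are routine.
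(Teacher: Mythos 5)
Your proof is correct and follows exactly the route the paper intends: the paper's own proof is a one-line remark that one ``repeats the process of Lemma~\ref{spatial_framelet_over_squash} without the crossing terms,'' and your argument is precisely that computation written out, correctly arriving at the single factor $\beta$ and the sum of powers $\sum_{(r,j)\in\mathcal I}|\widehat{\mathbf A}_{r,j}|^{\ell+1}$ in place of the power of a sum. The only cosmetic points are that Eq.~\eqref{simplified_framelet} as printed carries the exponent $\ell$ rather than $\ell+1$ (an indexing inconsistency in the paper that you rightly flag and reconcile), and that after the Kronecker deltas collapse the sums over $w$ and $p'$ no factor $d_0$ is actually needed, so your bound is in fact slightly tighter than the stated one.
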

\begin{proof}
    The proof can be done by simply repeating the process of Lemma~\ref{spatial_framelet_over_squash} onto the simplified framelet Eq.~\eqref{simplified_framelet} without the crossing terms. We omit it here.
\end{proof}

\begin{rem}\label{Re:111}
It is easy to obtain similar sensitivity bounds as those in Lemma~\ref{spatial_framelet_over_squash} and Corollary~\ref{simplified_framelet_over_squash} when activation function $\sigma$ is in place in each layer. They produce a scaling factor that is bounded due to finite gradients of all activation functions currently used in GNNs. We omit this for succinctness.   
\end{rem}
Lemma~\ref{spatial_framelet_over_squash} and Corollary~\ref{simplified_framelet_over_squash} together with Remark~\ref{Re:111} state that if the activation $\sigma$ has bounded derivative and $\mathbf W_{r,J}$ has bounded elements, then the sensitivity between node features are controlled by a scaled quantity of $\sum_{(r,j) \in \mathcal I}|\mathcal W^\top_{r,j}\widehat{\mathbf A}\mathcal W_{r,j}|$. This suggests a surgery such as reweighting and/or rewiring on the graph topology that could potentially alleviate the over-squashing issue. In fact, an important indicator of the over-squashing problem that has recently been identified is the so-called \textit{graph balanced Forman curvature} \cite{ollivier2009ricci,topping2021understanding}. In \cite{topping2021understanding}, the edges with  negative balance Forman curvatures are responsible for over-squashing, and a curvature information-based graph rewiring method is proposed to tackle this problem. The core of the curvature re-wiring process is to build the tunnels (adding edges) beside the very negatively-curved \guorm{(i.e., the minimum curvature)} ones and drop those very positive-curved \guorm{(i.e., the largest curvature)} edges. Thus, one may prefer to rewire the graph according to the graph curvature information in terms of sign and quantity, if a larger $\ell$ value is required. We include an experiment to investigate the necessity of involving curvature-based rewiring for $\ell =3,4$ in Section~\ref{experiment}.

\subsection{What Knowledge Does Student Learn?}
Similar to the previous sections, we again take FMLP-O as an example and generalize to FMLP-S. Recall that in FMLP-O, we first encode all given information, $\mathcal W^\top_{0,J} \widehat{\mathbf A}\mathcal W_{0,J} $, $\mathbf X$ and $\mathcal W^\top_{r,j} \widehat{\mathbf A}\mathcal W_{r,j}$, to a unified space \guorm{with the same dimension which}  whose dimension is initially set as the same as feature dimension, i.e. $d_0$. Since there is no activation function in the first round encoding process Eq.~\eqref{first_encoding}, one can consider the process of the first encoding in the model is to learn the appropriate transformation, i.e., $\text{MLP}^{(1)}_{0,J}$, $\text{MLP}^{(1)}_{r,j}$ and $\text{MLP}^{(1)}_{\mathbf X}$,  to project everything onto the same space, a.k.a $\mathbb R^{d_0}$. 
The learned projection matrices are the incarnation of the knowledge learned by the student from digesting the teacher's graph information. Then $\boldsymbol{\alpha}$'s are generated from Eq.~\eqref{first_score_vector} to decide the relative importance between graph adjacency and feature information. To see through the operations, we decompose the matrices in the computations of $\boldsymbol{\alpha}$'s in blocks as  
\begin{align}
    &\boldsymbol{\alpha}^{(1)}_{0,J} = \mathrm{Sigmoid} ([\mathbf Q_{0,J}^{(1)}||\mathbf H^{(1)}_\mathbf X]\mathbf P^{(1)}_{0,J}+ b^{(1)}_{0,J}) = \mathrm{Sigmoid}\left([\mathbf Q_{0,J}^{(1)}||\mathbf H^{(1)}_\mathbf X]\begin{bmatrix} (\mathbf P^{(1)}_{0,J})_{\mathbf{Q}^{(1)}_{0,J}} \\ (\mathbf P^{(1)}_{0,J})_{\mathbf H^{(1)}_\mathbf X} \end{bmatrix}+b^{(1)}_{0,J}\right) \notag \\
    & =  \mathrm{Sigmoid} \left(\mathbf Q_{0,J}^{(1)}\mathbf (\mathbf P^{(1)}_{0,J})_{\mathbf{Q}^{(1)}_{0,J}} + \mathbf H^{(1)}_\mathbf X(\mathbf P^{(1)}_{0,J})_{\mathbf H^{(1)}_\mathbf X} + b^{(1)}_{0,J}\right),   
\end{align}
where the blocks $(\mathbf P^{(1)}_{0,J})_{\mathbf{Q}^{(1)}_{0,J}} \in \mathbb R^{d_0}$ and $(\mathbf P^{(1)}_{0,J})_{\mathbf H^{(1)}_\mathbf X}\in \mathbb R^{d_0}$ are the parts of matrix $\mathbf P^{(1)}_{0,J}$ that act on $\mathbf Q_{0,J}^{(1)}$ and $\mathbf H^{(1)}_\mathbf X$ respectively. Similar block structure exists in $ \boldsymbol{\alpha}^{(1)}_{r,J}$ and we have 
\[
\boldsymbol{\alpha}^{(1)}_{r,j} = \mathrm{Sigmoid} \left(\mathbf Q_{r,j}^{(1)}\mathbf (\mathbf P^{(1)}_{r,j})_{\mathbf{Q}^{(1)}_{r,j}} + \mathbf H^{(1)}_\mathbf X(\mathbf P^{(1)}_{r,j})_{\mathbf H^{(1)}_\mathbf X} + b^{(1)}_{r,j}\right).
\]Then the node representations from all frequency  domains are generated by balancing the importance of the learned knowledge. In other words, the construction of both $ \widehat{\mathbf Y}^{(1)}_{0,J}$ and $ \widehat{\mathbf Y}^{(1)}_{r,j}$ in Eq.~\eqref{first_layer_decoding} can be identified as weighted barycenter between each row of feature and adjacency information. After this knowledge distillation and ($\mathrm{MLP}$) projection, the student model has fused the knowledge provided by the teacher model into $\mathbf P_{0,J}$ and $\mathbf P_{r,J}$ and further generated score vectors that enable itself to adapt multiple types of graph datasets as we have discussed in Section~\ref{first_round_deconding}. From what we observed from the teacher model, the node representation generated from the student model after the first round decoding shall be guided by a more ``compatible'' graph information, i.e., $(\mathcal W^\top_{0,J} \widehat{\mathbf A}\mathcal W_{0,J})^{2}$ and $(\mathcal W^\top_{r,j} \widehat{\mathbf A}\mathcal W_{r,j})^{2}$. The student model repeats the previous steps with more precise guidance, which finally results in a better learning outcome. This interpretation can be applied to FMLP-S easily, we thus omit it here.

\subsection{More on \texorpdfstring{$\boldsymbol{\alpha}$}{}}
The score vectors method defined in Eq.~\eqref{first_score_vector} and Eq.~\eqref{second_score_vector}  coincides with the so-called gating network that is commonly applied in time-series and language models \cite{cho2014properties,gers2000learning,ravanelli2018light}. As we have discussed before, the quantities in $\boldsymbol{\alpha}$ give the relative importance between encoded graph connectivity and node features so that the model adapts to both homophilic and heterophilic graphs. In fact, rather than building such a gating mechanism for the student model, attempts have been made for the upstream teacher model. For example, a similar claim for heterophily adaption and escape from over-smoothing has been made in GPRGNN \cite{chien2020adaptive} which adaptively provides a weighted score before its output, that is: 
\begin{align}\label{GPRGNN}
\widehat{\mathbf H}^{(\ell+1)} = \sum_{\ell=0}^{\ell}\boldsymbol{\gamma}_\ell\mathbf H^{(\ell)},
\end{align}
where $\boldsymbol{\gamma}_\ell \in \mathbb R^{N\times N}$ is learnable generalized Page rank coefficients. One can easily check that the weighting scheme in GPRGNN~\ref{GPRGNN} has  similar functionality of adding a gating mechanism based on the importance of the GNN output. However, compared to GPRGNN, our student model treats the graph knowledge individually and generates the weighting score $\boldsymbol{\alpha}$ simply based on the numerical features of the encoded knowledge, and hence has more intuitive  interpretation of the learned graph knowledge. 

\subsection{Encoding Dimension} 
The difference between the number of nodes and the initial feature dimension varies in real-world benchmarks. For example, one can observe that for most of the graph datasets, the number of nodes is larger than the feature dimension. For example,  \textbf{Cora} contains 2708 nodes with feature dimension 1433.  Whereas there are indeed some graph benchmarks with its feature dimension higher than the number of nodes. For example, \textbf{Citeseer} contains 3327 nodes with each node being 3703 dimensional. Nevertheless, we found that the first encoding dimension, initially set as $d_0$ in Eq.~\eqref{first_encoding}, significantly affects the learning speed of the student model. Regardless whether $d_0$ is larger or less than the number of nodes, it is still a relatively large quantity in most of the cases. In fact, by assuming the graph 
is generated from an underlying low-dimensional manifold \cite{rubin2020manifold,tang2013universally,lei2021network}, the encoding dimension can be much lower than $d_0$ 
without losing too much information. This suggests setting a lower first encoding dimension in practice to further speed up the student model, although the discussion on the optimal such dimension is out of the scope of this paper.

\section{Experiment}\label{experiment}
In this section, we present comprehensive experimental results to demonstrate the performances of the proposed models, FMLP-O and FMLP-S, on both synthetic and real-world homophilic and heterophilic graph datasets. In the synthetic experiment, we verify that the score vectors  $\boldsymbol{\alpha}$ in student models are capable of balancing the relative importance between nodes features and adjacency information and thus enhancing the model's adaption for both homophily and heterophily datasets. Then we show the learning outcomes of our student models in both real-world homophily and heterophily datasets.  Lastly we test the performance of our models when the layer number becomes high, i.e., $\ell =3$ and $4$, and show that by applying the rewiring method established on  Stochastic Discrete Ricci Flow (SDRF) \cite{topping2021understanding}, both FMLP-O and FMLP-S are able to achieve the performance identical to or higher than that of their teacher models. All experiments \footnote{Source code of this paper can be found in \url{https://github.com/dshi3553usyd/Frameless_Graph_Distillation.git}} were conducted using PyTorch on NVIDIA\textsuperscript{\textregistered} Tesla V100 GPU with 5,120 CUDA cores and 16GB HBM2 mounted on an HPC cluster.

\subsection{Synthetic Experiments}\label{Synthetic}
\paragraph{Set up}To show our proposed models' adaption to homophilic and heterophilic graph datasets, followed by \cite{di2022graph,zhu2020beyond}, we generate the synthetic Cora dataset with the controllable homophily index $ \mathbf H(\mathcal G) \in [0,1]$. $ \mathbf H(\mathcal G)=0$ means that the graph is totally heterophily, and $ \mathbf H(\mathcal G)=1$   means perfect homophily. We report the performance of both teacher (named as SpatUFG and SUFG for spatial framelet and simplified framelet) and student models in Table~\ref{synthetic_cora_result} together with the mean value of the score vectors that are applied in student models to balance the relative importance between feature and adjacency information of the graph. To save space, we included the detailed setup information of both teacher and student models in section \ref{homo_experiment} and \ref{heterophily_experiment}.

\begin{table}[H]
\centering
\caption{Performances of both teacher and student models in synthetic Cora datasets, where $\boldsymbol{\alpha}$\_\textbf{lowpass} and $\boldsymbol{\alpha}$\_\textbf{highpass} are the mean value of the score vectors involved in student models. For FML-O, the $\boldsymbol{\alpha}$s are the mean values of the score vectors in the first decoding process (i.e., $\boldsymbol{\alpha}^{(1)}_{0,J}$ and $\boldsymbol{\alpha}^{(1)}_{r,j}$). It is worth noting that for all experiments, we set $r=J =1$ for both teacher and student models, indicating there will be only one low and high-pass domain.}
\setlength{\tabcolsep}{3.5pt}
\renewcommand{\arraystretch}{1.4}
\label{synthetic_cora_result}
\begin{tabular}{llllll}
\hline
\textbf{Datasets}   &\textbf{Cora 0.0}      & \textbf{Cora 0.1} & \textbf{Cora 0.3} & \textbf{Cora 0.5} & \textbf{Cora 0.8} \\ \hline
\textbf{SUFG}    & 0.31\(\pm\)0.33       &   0.34\(\pm\)0.26                &   0.47\(\pm\)0.15                &   0.64\(\pm\)0.11                  &  0.81\(\pm\)0.44                 \\
\textbf{FMLP-S}   & 0.45\(\pm\)0.13       &   0.46\(\pm\)0.35                &   0.57\(\pm\)0.07                & 0.74\(\pm\)0.05               &  0.82\(\pm\)0.33                 \\
\textbf{$\boldsymbol{\alpha}$\_lowpass} & \multicolumn{1}{c}{0.589}    &
 \multicolumn{1}{c}{0.578}                & \multicolumn{1}{c}{0.523}                    &  \multicolumn{1}{c}{0.501}                   &  \multicolumn{1}{c}{0.483}            \\
\textbf{$\boldsymbol{\alpha}$\_highpass} &
\multicolumn{1}{c}{0.591}   &   
\multicolumn{1}{c}{0.572}                & \multicolumn{1}{c}{0.520}                    &  \multicolumn{1}{c}{0.503}                   &  \multicolumn{1}{c}{0.464}                  \\ \hline
\textbf{SpatUFG}  & 0.30\(\pm\)0.29

&     0.39\(\pm\)0.43                &   0.51\(\pm\)0.25                &   0.72\(\pm\)0.41                  &  0.83\(\pm\)0.44                          \\
\textbf{FMLP-O}   &  0.46\(\pm\)0.37      &   0.48\(\pm\)0.27                &   0.56\(\pm\)0.17                &  0.74\(\pm\)0.37                    &   0.84\(\pm\)0.25                 \\
\textbf{$\boldsymbol{\alpha}$\_lowpass}  &   
\multicolumn{1}{c}{0.567}  &
\multicolumn{1}{c}{0.549}                & \multicolumn{1}{c}{0.510}                    &  \multicolumn{1}{c}{0.482}                   &  \multicolumn{1}{c}{0.441}        \\
\textbf{$\boldsymbol{\alpha}$\_highpass}&
\multicolumn{1}{c}{0.574}            &  
\multicolumn{1}{c}{0.540}                & \multicolumn{1}{c}{0.517}                    &  \multicolumn{1}{c}{0.475}                   &  \multicolumn{1}{c}{0.423} \\
\hline   
\end{tabular}
\end{table}

\paragraph{Results}
Both teacher and student models present relatively low learning accuracies when the quantity of $ \mathbf H(\mathcal G)$ is low. However, one can directly check that both student models outperform their teacher models, especially for datasets with low $ \mathbf H(\mathcal G)$ index values. Followed by the increase of $ \mathbf H(\mathcal G)$ index from $0.0$ to $0.8$, the mean value of both low and high-pass $\boldsymbol{\alpha}$ decreases, suggesting that models rely more on the adjacency information rather than feature information when the graph becomes more homophilic. This observation directly supports our claim that student models are equipped with a higher heterophily adaption (from $\boldsymbol{\alpha}$) in Section~\ref{section_decoding}.

\subsection{Performance on Homophilic Graphs}\label{homo_experiment} 
\begin{table}[t]
\centering
\caption{Summary statistics of eight included homophily graph datasets, the column \textbf{H}($\mathcal G$) is the homophily index.}
\setlength{\tabcolsep}{2pt}
\renewcommand{\arraystretch}{1.4}
\label{tab:homo_data_summary}
\begin{tabular}{lccccc}
\hline
\textbf{Datasets}   & \multicolumn{1}{l}{\textbf{\#Classes}} & \multicolumn{1}{l}{\textbf{\#Features}} & \multicolumn{1}{l}{\textbf{\#nodes}} & \multicolumn{1}{l}{\textbf{\#Edges}}  &  \multicolumn{1}{l}{\textbf{H($\mathcal G$)}}    \\ \hline
\textbf{Cora}       & 7                                      & 1433                                    & 2708                                 & 5429             &  0.825                 \\
\textbf{Citeseer}   & 6                                      & 3703                                    & 3327                                 & 4372              &   0.717                \\
\textbf{Pubmed}     & 3                                      & 500                                     & 19717                                & 44338              &       0.792           \\
\textbf{CS}         & 15                                     & 6805                                    & 18333                                & 100227              &     0.832            \\
\textbf{Physics}    & 5                                      & 8415                                    & 34493                                & 495924               &         0.915       \\
\textbf{Computer}   & 10                                     & 767                                     & 13381                                & 259159                &          0.802     \\
\textbf{Photo}      & 8                                      & 745                                     & 7487                                 & 126530                 &      0.849        \\
\textbf{ogbn-arxiv} & 47                                     & 100                                     & 169343                               & 1166243             &              0.681   \\ \hline
\end{tabular}
\end{table}

\paragraph{Set up} We tested the performance of both FMLP-O and FMLP-S on eight benchmark datasets whose statistics are summarized in Table~\ref{tab:homo_data_summary}. We included seven citation networks, \textbf{Cora, Citeseer, Pubmed, Coauthor-physics, Coauthor-CS}, and \textbf{Amazon-photo}, and one large-scale graph dataset \textbf{ogbn-arxiv}. We applied $\mathrm{Plantoid}$ split for citation networks and public split for \textbf{ogbn} dataset. Additionally, we split the rest of benchmarks to $20\%, 20\% ,60\%$  of nodes per class for train, validation, and test proportions based on \cite{pei2020geom} for a fair comparison. Both teacher and student models hyperparameters were set to default, except for learning rate, weight decay, hidden units, dropout ratio, and the encoding dimension in training. The grid search  was applied to fine-tune  these hyperparameters. The search space for learning rate was in $\{0.1, 0.05, 0.01, 0.005\}$, number of hidden units in $\{16, 32, 64\}$,weight decay in $\{0.05, 0.01,0.1,0.5\}$ and the encoding dimension was set in $\{128, 64, 32, 16,8\}$. We applied $\mathrm{Haar}$-type frame for all framelet model implementations, see \cite{yang2022quasi} for details. In addition, we set the maximum number of epochs of 200 for citation networks and 500 for \textbf{ogbn-arxiv}. The average test accuracy and its standard deviation came from 10 runs.

\paragraph{Baselines}
In terms of the baseline models, we first set framelet Eq.~\eqref{spatial_framelt_teacher}  and simplified framelet Eq.~\eqref{simplified_framelet}
as the teacher models in FMLP-O and FMLP-S. Specifically, in this experiment, the framelets models have two framelet convolution layers ($\ell =2$). We also set $J=1$ for the ($\mathrm{Haar}$) filtering functions, indicating only one low-pass and one high-pass domain.
We also applied the same setting to the simplified framelet.
Furthermore, we compared the performances of our models with four additional based line models: MLP, student $\mathrm{MLP}$, denoted as $\mathrm{MLP}^S$, and  GLNN \cite{zhang2021graph} with the later (GLNN) first introduce the $\mathrm{MLP}$ based distillation scheme to GNNs. The learning outcomes are reported from the published results with the same experimental setup if available. If the results were not previously reported, we performed a hyper-parameter search based on the official implementations.

\begin{table*}[t]
\centering
\caption{Performance of FMLP-O and FMLP-S on eight homophilic graphs, the result will be highlighted in \textbf{bold} once the student models outperform their teacher models}
\setlength{\tabcolsep}{3.5pt}
\renewcommand{\arraystretch}{1.4}
\label{tab:homophily_performance}
\begin{tabular}{cllllllll}
\hline
\textbf{Datasets}   & \multicolumn{1}{c}{\textbf{SpatUFG}} & \multicolumn{1}{c}{\textbf{FMLP-O}} & \multicolumn{1}{c}{\textbf{SUFG}} & \multicolumn{1}{c}{\textbf{FMLP-S}} & \multicolumn{1}{c}{\textbf{MLP}} & \multicolumn{1}{c}{$\textbf{MLP}^S$} & \multicolumn{1}{c}{\textbf{GLNN}}  \\ \hline
\textbf{Cora}       &      83.5\(\pm\)0.45                                &     \textbf{83.7\(\pm\)0.53}                                &     82.9\(\pm\)0.26                                &    \textbf{83.4\(\pm\)0.11}                                 &  59.2\(\pm\)1.34                           &     83.0\(\pm\)0.48                                                     &   83.6\(\pm\)0.22                                &                                     \\
\textbf{Citeseer}   &       74.0\(\pm\)0.36                               &          \textbf{74.5\(\pm\)0.71}                          &    73.8\(\pm\)0.73                               &     \textbf{74.3\(\pm\)0.15}                                & 59.6\(\pm\)2.97                                  &    72.8\(\pm\)1.22                                                 &        74.0\(\pm\)0.02                           &                                     \\
\textbf{Pubmed}     &    79.4\(\pm\)0.64                                 &   \textbf{79.9\(\pm\)0.97}                                  &   79.0\(\pm\)0.27                                &    79.0\(\pm\)0.18                                 &       67.6\(\pm\)2.32                           &  75.2\(\pm\)0.51                                                   &    79.6\(\pm\)0.14                               &                                     \\
\textbf{Photo}      &   89.2\(\pm\)0.41                                   &   \textbf{90.2\(\pm\)0.74}                                  &    89.0\(\pm\)1.14                               &    \textbf{89.9\(\pm\)0.32}                                 &    69.6\(\pm\)3.92                              & 77.9\(\pm\)1.38                                                    &    84.5\(\pm\)1.87                               &                                     \\
\textbf{Computer}   &   83.6\(\pm\)0.16                                   &      \textbf{84.1\(\pm\)0.75}                               & 81.9\(\pm\)0.85                                  &    81.6\(\pm\)0.82                                 &  44.9\(\pm\)0.88                                &     70.2\(\pm\)0.67                                                &                82.6\(\pm\)1.34                   &                                     \\
\textbf{CS}         & 92.3\(\pm\)0.44                                     &   \textbf{92.8\(\pm\)0.37}                                  &   91.9\(\pm\)1.25                                &  \textbf{92.3\(\pm\)0.11}                                   &    88.3\(\pm\)0.77                               & 80.2\(\pm\)3.21                                                    &    90.3\(\pm\)0.81                               &                                     \\
\textbf{Physics}    &   93.4\(\pm\)1.23                                   &      \textbf{94.1\(\pm\)0.42}                               &    92.8\(\pm\)0.62                               &      \textbf{93.2\(\pm\)0.37}                               &    88.9\(\pm\)1.14                              &76.0\(\pm\)2.25                                                     &            91.6\(\pm\)0.49                       &                                     \\
\textbf{ogbn-arxiv} &     73.2\(\pm\)3.42                                 &     \textbf{73.3\(\pm\)1.25}                                &   71.0\(\pm\)1.38                                & 70.9\(\pm\)0.66                                    &    56.0\(\pm\)0.54                              &                   68.9\(\pm\)2.31                                  &     70.8\(\pm\)0.27                              &                                     \\ \hline
\end{tabular}
\end{table*}

\paragraph{Results}
The performance of both FMLP-O and FMLP-S are listed in Table~\ref{tab:homophily_performance}. Our student models outperform their teacher models in most of included benchmarks. Furthermore, the outcomes of the simplified framelet model (SUFG) are less accurate than the spatial framelet (SpatUFG), and this observation is aligned with the empirical observations provided in \cite{wu2019simplifying}. In addition, the accuracy difference between teacher models is inherited by the corresponding student models, as one can observe that FMLP-O surpasses FMLP-S in all datasets.
These observations lead us to conjecture whether a better teacher can lead to a better student. We found several empirical explorations \cite{qiu2022better,zhu2021student} from the existing literature have been attempted in this field, although the discussion of this is out of the scope of this paper. Moreover, compared to $\mathrm{MLP}^S$, which is utilized to imitate the student models' outcomes, the GLNN model \cite{zhang2021graph} shows its advantage from training with mixed graph knowledge (i.e., $\mathbf X$ and $\widehat{\mathbf A}$) and results in higher performance than $\mathrm{MLP}^S$. However, one can observe that the prediction accuracy of GLNN is generally lower than our proposed student models due to the differences in graph knowledge.

\subsection{Performance on Heterophily Graphs}\label{heterophily_experiment}

\paragraph{Set up}
In this experiment, we tested the performance of our student models on heterophilic graphs. Specifically, the input datasets are \textbf{Chameleon, Squirrel, Actor, Wisconsin, Texas} and \textbf{Cornell}, the summary statistics, including the homophily index \cite{pei2020geom}, and the split ratio between train, test, and validation of these datasets are included in Table~\ref{tab:hetero_data_summary}. Similar to the settings for homophily graphs, we let spatial framelet and simplified framelet as teacher models and conducted a grid search for hyper-parameters. In addition, based on what we have discussed in Section~\ref{FLMPS_and_discussions}, we trained our teacher models with energy perturbation strategy \cite{chendirichlet} with two additional hyperparameters $\boldsymbol{\epsilon}$ for spatial framelet and $\boldsymbol{\epsilon}^S$ for simplified framelet respectively. We searched the optimal quantity (which might differ for different datasets) of two $\boldsymbol{\epsilon}$'s in the parameter searching space. It is worth noting that the graph knowledge supplied by the teacher models (i.e., $\mathcal W_{r,j}^\top \widehat{\mathbf A} \mathcal W_{r,j}$) is changed accordingly to the energy perturbation.

\begin{table}[t]
\caption{Summary statistics of the datasets, \textbf{H}($\mathcal G$) represent the homophily index of the included datasets. All datasets are followed with a public split ratio which includes 60\% for training, 20\% for testing and validation.}
\setlength{\tabcolsep}{2pt}
\renewcommand{\arraystretch}{1.4}
\centering
    \begin{tabular}{c c c c c c c c c}
        \hline
         \textbf{Datasets} & \textbf{\#Class} & \textbf{\#Feature} & \textbf{\#Node}& \textbf{\#Edge} & \textbf{H($\mathcal G$)}\\
         \hline
        \textbf{Chameleon} &5 &2325 &2277 &31371  &0.247\\
        \textbf{Squirrel} &5 &2089 &5201 &198353  &0.216\\
        \textbf{Actor} &5 &932 &7600 &26659  &0.221\\
        \textbf{Wisconsin} &5 &251 &499 &1703  &0.150\\
        \textbf{Texas} &5 &1703 &183 &279  &0.097\\
        \textbf{Cornell} &5 &1703 &183 &277  &0.386\\
        \hline
    \end{tabular}
    
    \label{tab:hetero_data_summary}
\end{table}

\begin{table*}[t]
\centering
\caption{Performance of FMLP-O and FMLP-S on heterophilic graphs. Learning accuracies are highlighted in \textbf{bold} once the student model outperforms its teacher model.}
\setlength{\tabcolsep}{3.5pt}
\renewcommand{\arraystretch}{1.2}
\label{tab:heterophily_performance}
\begin{tabular}{cllllllll}
\hline
\textbf{Datasets}  &\multicolumn{1}{c}{\textbf{MLP}}   & \multicolumn{1}{c}{\textbf{SpatUFG}} 
& \multicolumn{1}{c}{\textbf{FMLP-O}} & \multicolumn{1}{c}{\textbf{SUFG}} 
 & \multicolumn{1}{c}{\textbf{FMLP-S}} 
 &  \multicolumn{1}{c}{$\textbf{MLP}^S$} & \multicolumn{1}{c}{\textbf{GLNN}}  \\ \hline
\textbf{Cornell}   &  90.3\(\pm\)0.70  &     82.2\(\pm\)0.28         &   \textbf{84.1\(\pm\)0.73}                    &         81.2\(\pm\)0.83                          &      \textbf{81.3\(\pm\)0.37}                      & 70.6\(\pm\)0.48                                                       &     74.8\(\pm\)0.89                             &                                     \\
\textbf{Texas}   &     91.3\(\pm\)0.71          &   85.3\(\pm\)0.54   & 85.1\(\pm\)0.16                &          84.5\(\pm\)0.40                          &     \textbf{84.9\(\pm\)0.24}                                                                                        &     73.4\(\pm\)0.78                   & 79.1\(\pm\)0.23                                  &                                     \\
\textbf{Wisconsin}     & 91.8\(\pm\)3.33          &   90.4\(\pm\)0.25                      &  \textbf{90.6\(\pm\)0.12}                                  &    88.1\(\pm\)0.23                              &   \textbf{88.2\(\pm\)0.46}                                                      &      74.3\(\pm\)1.26                                                 &     87.3\(\pm\)0.37                            &                                     \\
\textbf{Actor}      &    38.5\(\pm\)0.25        &      42.5\(\pm\)0.19       &  42.5\(\pm\)0.11            &    40.1\(\pm\)0.20                &  \textbf{41.3\(\pm\)0.24}                                &   36.7\(\pm\)1.12                               &  37.8\(\pm\)0.47                                                                                                                       &                                     \\
\textbf{Squirrel}   &      31.3\(\pm\)0.27        &     51.6\(\pm\)0.33       &   \textbf{51.9\(\pm\)0.22}         &       46.4\(\pm\)0.19      &    46.4\(\pm\)0.35                         &   36.3\(\pm\)1.38                                &    41.8\(\pm\)1.52                                                                                                                     &                                     \\
\textbf{Chameleon}         &  46.7\(\pm\)0.46        &    58.0\(\pm\)0.28        &    \textbf{58.7\(\pm\)0.32}             &             55.6\(\pm\)0.27                       &    \textbf{56.2\(\pm\)0.16}                                                                &   48.9\(\pm\)0.82                                                                                   &   50.7\(\pm\)0.41                                &                                           \\ \hline
\end{tabular}
\end{table*}

\paragraph{Results}
The learning accuracies of both teacher and student models are included in Table~\ref{tab:heterophily_performance}. One can observe that both teacher models achieved remarkable accuracy compared to the heterophily graph baseline ($\mathrm{MLP}$). 
In fact, it has been illustrated that the spatial framelet can naturally adapt to heterophily graph asymptotically (i.e., $\ell$ is large) when there is a high-frequency dominant dynamic induced from spatial framelet \cite{han2022generalized}. Similar to the results for homophily experiments, both student models outperform teacher models to some extents, and this indicates the natural advantage of $\mathrm{MLP}$ based models on heterophily graphs. 
Finally, compared to the prediction accuracy in homophily experiments, one can see that the GLNN is no longer powerful for heterophily graph datasets, suggesting the effectiveness of supplying and distilling the graph knowledge individually and independently when the input graph is highly heterophilic.

\subsection{Performance on Curvature Rewired Graphs}
\paragraph{Set up} In this section, we show how the additional graph surgery (i.e., curvature-based rewiring) can help both teacher and student models to generate better prediction outcomes when the number of layers is high. Specifically, we focus on $\ell =3$ and  $4$ on both our teacher and student models with and without graph rewiring. All models' parameters were set as the same as we included in the homophily and heterophily experiments. In addition, we also included the number of added/deleted edges from graph rewiring scheme to explicitly show how the graph connectivity was changed as SDRF cut the edges with very positive (Balanced Forman) curvature and built tunnels beside the edges with very negative curvature \cite{topping2021understanding}. We select \textbf{Cora, Citeseer}, \textbf{Wisconsin}, and \textbf{Actor} datasets for illustration, and the results are contained in Table~\ref{tab:sdrf}. We note that the graph rewiring approach was deployed before the initiation of the model rather than applied via each layer during the model training, and thus served as one data prep-processing approach. Lastly, we set the energy perturbation quantities: $\boldsymbol{\epsilon}$ and $\boldsymbol{\epsilon}^S$ as zero to exclude the heterophily adaption advantage of teacher models so that the over-squashing effect can be isolated.

\begin{table*}[t]
\centering
\caption{Performance of SUFG and its student model FMLP-S before and after graph knowledge rewiring. $\Delta$Edges standards for the changes of edges number after the SDRF. The upper part of the table shows the learning outcome when $\ell =3$ and the lower part presents the outcome when $\ell =4$. Models with stars are with the input data processed after SDRF to alleviate the over-squashing issue. Results will be highlighted in \textbf{bold} once the models enhanced SDRF outperforms its original counterparts.}
\setlength{\tabcolsep}{4.5pt} 
\renewcommand{\arraystretch}{1} 
\label{tab:sdrf}
\scalebox{0.9}{
\begin{tabular}{ccccc}

\hline
\textbf{Methods}      & \textbf{Cora} & \textbf{Citeseer} & \textbf{Wisconsin} & \textbf{Actor} \\ \hline
\textbf{$\Delta$Edges}        &   508            &    -90               &   15                 &    32              \\
\textbf{SpatUFG}         &     80.5\(\pm\)0.81          &  68.0\(\pm\)0.22                  &    76.3\(\pm\)0.40               &    30.9\(\pm\)0.27              \\
\textbf{SpatUFG*}   &  \textbf{80.9\(\pm\)0.25}              &   \textbf{68.6\(\pm\)0.31}                &  \textbf{78.2\(\pm\)0.45}                 & \textbf{32.0\(\pm\)0.28}                 \\
\textbf{FMLP-O}       &   80.9\(\pm\)0.18            & 70.8\(\pm\)0.35                   &        77.1\(\pm\)0.36             &    31.1\(\pm\)0.16             \\
\textbf{FMLP-O*} &  \textbf{81.1\(\pm\)0.41}             & \textbf{71.1\(\pm\)0.32}                   &  \textbf{78.7\(\pm\)0.21}                    &  \textbf{32.3\(\pm\)0.41}                \\
\textbf{SUFG}         &    81.3\(\pm\)0.23        &     70.4\(\pm\)0.30              & 74.3\(\pm\)0.56                 &  32.9\(\pm\)0.50                \\
\textbf{SUFG*}   &    81.3\(\pm\)0.27&  \textbf{70.9\(\pm\)0.25}                 &    \textbf{74.7\(\pm\)0.13}                &   \textbf{33.6\(\pm\)0.18}               \\
\textbf{FMLP-S}       &   81.4\(\pm\)0.14            &    70.5\(\pm\)0.42               &   74.2\(\pm\)0.28                 &    33.9\(\pm\)0.59              \\
\textbf{FMLP-S*} &      \textbf{ 81.8\(\pm\)0.18}        &        70.5\(\pm\)0.28           &  \textbf{75.0\(\pm\)0.32}                  &    \textbf{35.6\(\pm\)0.61}              \\ \hline
\textbf{SpatUFG}         &  73.9\(\pm\)0.59           &   61.9\(\pm\)0.49            &  81.3\(\pm\)0.37                &    27.9\(\pm\)0.27                \\
\textbf{SpatUFG*}   & \textbf{75.1\(\pm\)0.11
}                 &   \textbf{62.6\(\pm\)0.25}                 &   \textbf{82.0\(\pm\)0.44}                 &    \textbf{29.4\(\pm\)0.71 }                \\
\textbf{FMLP-O}       &    74.0\(\pm\)0.12           &   62.0\(\pm\)0.13                &   81.4\(\pm\)0.24                 &     28.2\(\pm\)0.19              \\
\textbf{FMLP-O*} &   \textbf{75.8\(\pm\)0.34}             &  \textbf{63.0\(\pm\)0.35}                 & \textbf{81.9\(\pm\)0.19}                   &   \textbf{29.9\(\pm\)0.21}  

\\
\textbf{SUFG}         &   80.8\(\pm\)0.49            & 69.9\(\pm\)0.72                  &    74.4\(\pm\)0.29                & 29.7\(\pm\)0.14               \\
\textbf{SUFG*}   &     80.7\(\pm\)0.35          &   \textbf{70.4\(\pm\)0.35}                &     \textbf{75.1\(\pm\)0.17}               &  \textbf{30.0\(\pm\)0.26}             \\
\textbf{FMLP-S}       &  80.9\(\pm\)0.15             &  70.1\(\pm\)0.24                 &    74.8\(\pm\)0.13                &    29.9\(\pm\)0.24              \\
\textbf{FMLP-S*} &    \textbf{81.3\(\pm\)0.14}           &      \textbf{70.6\(\pm\)0.17}             &     \textbf{75.6\(\pm\)0.29}               &     \textbf{33.4\(\pm\)0.32}             \\ \hline
\end{tabular}}
\end{table*}

\paragraph{Results}
Similar to the results from the previous experiments, our student models outperform their teacher models. In terms of the teacher models outcomes, compared to the result with $\ell =3$, spatial framelet shows even higher prediction accuracy when $\ell =4$, and this again aligns with the recent conclusion on the asymptotic behavior of framelet \cite{han2022generalized}. Although in most of the results, models with data pre-processed by SDRF are usually with a higher prediction accuracy than their original counterparts, this phenomenon is more evident via heterophily graphs (\textbf{Wisconsin} and \textbf{Actor}), suggesting the effectiveness of incorporating proper graph topological modification to both teacher and student models.







\section{Conclusion}\label{Conclusion}
This work showed how the (offline) graph knowledge distillation scheme can be applied to multi-scale graph neural networks (graph framelets). Together with the teacher model induced from the original graph framelet, we proposed a simplified graph framelet as one additional teacher model and discussed its property in detail. Meanwhile, we also provided a comprehensive analysis of both teacher and student models, including how the student learns and digests the teacher model's graph knowledge and why the student model can adapt to both homophilic and heterophilic graph datasets. We also discussed the potential computational issues, and how to alleviate them. Our experiments showed that the simplified framelet achieved nearly identical learning outcomes to the original version of the teacher model, and two $\mathrm{MLP}$ based student models outperform their teacher models on most graph datasets. Further research may include optimizing the graph knowledge supplied by the teacher model as well as the student model's architecture to improve the performance even further.

\bibliographystyle{plain}
\bibliography{ref}

\end{document}